\newlength\mylen
\tikzset{
    -Latex,auto,node distance =1 cm and 1 cm,semithick,
    state/.style ={ellipse, draw, minimum width = 0.7 cm},
    point/.style = {circle, draw, inner sep=0.04cm,fill,node contents={}},
    bidirected/.style={Latex-Latex,dashed},
    el/.style = {inner sep=2pt, align=left, sloped}
}
\newcommand{\red}[1]{{\leavevmode\color{red}{#1}}}
\newcommand{\green}[1]{{\leavevmode\color[RGB]{0,128,0}{#1}}}
\newcommand{\orange}[1]{{\leavevmode\color[RGB]{255,127,39}{#1}}}
\newcommand{\darkgray}[1]{{\leavevmode\color[RGB]{140,140,140}{#1}}}
\newcommand{\gray}[1]{{\leavevmode\color[RGB]{180,180,180}{#1}}}
\newcommand\todo[1]{\red{TODO: {#1}}}
\newcommand\tocite{\red{[CITE]}}
\newcommand\expect[2]{\mathbb{E}_{#1}{\left[ {#2} \right]}}
\newcommand\prob[2]{P_{#1}{\left( {#2} \right)}}
\DeclareMathOperator*{\argmax}{argmax}
\DeclareMathOperator*{\argmin}{argmin}
\newcommand{\one}[1]{\mathds{1}{\{{#1}\}}}
\newcommand{\indep}{\bot}
\newcommand{\naive}{na\"ive}
\newcommand{\Naive}{Na\"ive}
\newcommand{\xbar}{{\bar{x}}}
\newcommand{\R}{\mathbb{R}}
\newcommand{\yhat}{{\hat{y}}}
\newcommand{\vtilde}{{\tilde{v}}}
\newcommand{\dist}{D}
\newcommand{\env}{{e}}
\newcommand{\envs}{{\cal{E}}}
\newcommand{\envstrn}{{\envs_{\mathrm{train}}}}
\newcommand{\disttrn}{{\dist_{\mathrm{train}}}}
\newcommand{\smplst}{{S}}
\newcommand{\loss}{{L}}
\newcommand{\reg}{{R}}
\newcommand{\carrow}{\textcolor{blue}{\rightarrow}}
\newcommand{\acarrow}{\textcolor{red}{\leftarrow}}
\newcommand{\belarrow}{\textcolor{teal}{\rightarrow}}
\newcommand{\skeparrow}{\textcolor{brown}{\leftarrow}}
\newcommand{\uc}{{u_{\carrow y}}} 
\newcommand{\uac}{{u_{\acarrow y}}} 
\newcommand{\ubel}{{u_{x \belarrow r}}} 
\newcommand{\uskep}{{u_{x \skeparrow r}}}
\newcommand{\fbel}{{f_{x \belarrow r}}} 
\newcommand{\fskep}{{f_{x \skeparrow r}}}
\newcommand{\MMD}{\mathrm{MMD}}
\newtheorem{proposition}{Proposition}
\newtheorem{definition}{Definition}
\newtheorem{corollary}{Corollary}
\newtheorem{lemma}{Lemma}
\title{In the Eye of the Beholder: \\ Robust Prediction with Causal User Modeling}
\author{%
  Amir Feder \\
  Columbia University \\
  \texttt{amir.feder@columbia.edu} \\ 
  \And 
  Guy Horowitz \\
  Technion \\
  \And
  Yoav Wald \\
  Johns Hopkins University \\
  \And
  Roi Reichart \\
  Technion \\
  \And
  Nir Rosenfeld \\
  Technion \\
}
\begin{document}

\maketitle

\begin{abstract}
    Accurately predicting the relevance of items to users is crucial to the success of many social platforms. Conventional approaches train models on logged historical data; but recommendation systems, media services, and online marketplaces all exhibit a constant influx of new content---making relevancy a moving target, to which standard predictive models are not robust. In this paper, we propose a learning framework for relevance prediction that is robust to changes in the data distribution. Our key observation is that robustness can be obtained by accounting for \emph{how users causally perceive the environment}. We model users as boundedly-rational decision makers whose causal beliefs are encoded by a causal graph, and show how minimal information regarding the graph can be used to contend with distributional changes. Experiments in multiple settings demonstrate the effectiveness of our approach.
\end{abstract}

\section{Introduction}
\label{sec:intro}

Across a multitude of domains and applications,
machine learning has become imperative for guiding human users in many of the decisions they make \citep{siddiqi2012credit,wuest2016machine,callahan2017machine}.
From recommendation systems and search engines to e-commerce platforms and online marketplaces,
learned models are regularly used to filter content, rank items, and display select information---all with the primary intent of helping users choose items that are relevant to them.
The predominant approach for learning in these tasks
is to train models to accurately predict the relevance of items to users.
But since training is often carried out on logged historical records,
even highly-accurate models remain calibrated to the distribution of \emph{previously} observed data on which they were trained \cite{bonner2018causal, zhang2021causal, wang2021deconfounded}.
Given that in virtually any online platform the distribution of content naturally varies over time and location---due to trends and fashions, innovation, or forces of supply and demand---models trained on logged data
may fail to correctly predict the choices and preferences of users on unseen, future distributions \citep{dandekar2013biased, abdollahpouri2017controlling, faddoul2020longitudinal, krauth2020offline, mladenov2021recsim}.

In this paper, we present a novel conceptual framework for learning predictive models
of user-item relevance that are robust to changes in the underlying data distribution.
Our approach is built around two key observations:
(i) that relevance to users is determined by the way in which users \emph{perceive} value,
and (ii) that this process of value attribution is \emph{causal} in nature.
As an example, consider a video streaming service
in which a user $u$ is trying to determine whether watching a certain movie will be worthwhile.
To make this decision, $y \in \{0,1\}$, the user
has at her disposal a feature description of the movie, $x$, and a system-generated, personalized relevance score, $r$
(e.g., ``a 92\% match!'').
How will she integrate these two informational sources into a decision?
We argue that this crucially hinges on her belief as to \emph{why}
a particular relevance score is coupled to a particular movie.
For example, if a movie boasts a high relevance score, 
then she might suppose this score was given \emph{because} the system believes the user would like this movie.
Another user, however, may reason differently,
and instead believe that high relevance scores are given \emph{because} movies are sponsored;
if she suspects this to be a likely scenario, her reasoning should have a stark 
effect on her choices.
In both cases above,
perceived values (and the actions that follow)
stem from how each user causally interprets the recommendation environment
$\env$,
and the underlying causal structure determines how belief regarding value changes, or does not, in response to changes in important variables (e.g., in $u$, $x$, or $r$).

Here, we show how 
knowledge regarding 
the causal perceptions of users
can be leveraged for providing
distributional robustness in learning.
A primary concern for robust learning is the reliance of predictions
on spurious correlations \cite{arjovsky2019invariant};
here we argue that spuriousness can \emph{result} from
causal perceptions underlying user choice behavior.
To see the relation between causal perceptions and spuriousness, 
assume that in our movies example above,
the training data exhibits a strong correlation
between users' choices of movies, $y$, 
and a `genre' feature, $x_g$.
A predictive model optimized for accuracy will likely exploit this association, and rely on $x_g$ for prediction. 
Now, further assume that what \emph{realy} drives user satisfaction is
`production quality', $x_q$;
if $x_g$ and $x_q$ are \emph{spuriously} correlated in the training data,
then once the distribution of genres naturally changes over time,
the predictive model can fail:
the association between $x_g$ and $y$,
on which predictions rely, may no longer hold.

In essence, our approach casts robust prediction of personalized relevance as
a problem of out-of-distribution (OOD) learning,
but carefully tailored to settings where data generation is governed
by users' causal beliefs and corresponding behavior.
There is a growing recognition of how
a causal understanding of the learning environment can improve cross-domain generalization \citep{arjovsky2019invariant, wald2021calibration};
our key conceptual contribution is the observation that,
in relevance prediction,
users' perceptions \emph{are} the causal environment.
Thus, there is no `true' causal graph---all is in the eye of the beholder.
To cope with this,
we model users as reasoning about decisions through a causal graph \citep{pearl2009causality, spiegler2020behavioral}---thus allowing
our approach to anticipate how changes in the data translate to changes in user behavior.
Building on this idea, 
as well as on recent advances in the use of causal modeling for out-of-distribution learning \citep{arjovsky2019invariant, wald2021calibration,veitch2021counterfactual},
we show how various levels of knowledge regarding users' causal beliefs---whether inferred or assumed---can be utilized for learning distributionally robust predictive models.

\begin{wrapfigure}{r}{0.47\textwidth}
\centering
\captionsetup[subfigure]{labelformat=empty}
    \begin{subfigure}{0.17\textwidth}
    \begin{center}
    \begin{tikzpicture}[node distance = 1.2cm]
        \node[circle, draw, text centered] (x) {$x,r$};
        \node[circle, draw, right of = x, text centered] (y) {$y$};
        \node[circle, draw, left of = x, text centered] (e) {$\env$};

        \draw[blue, ->, line width = 1,transform canvas={yshift=1.5mm}] (x) -- (y);
        \draw[red, ->, line width = 1,transform canvas={yshift=-1.5mm}] (y) -- (x);
        \draw[->, line width = 1] (e) -- (x);
        \draw[<->, line width = 1, dash dot] (e) to[out=90, in=90, distance=0.5cm] (y);
        \end{tikzpicture}
        \end{center}
    \caption{(a)}\label{graph-causal-1}
    \end{subfigure}
    \quad
    \begin{subfigure}{0.25\textwidth}
    \begin{center}
    \begin{tikzpicture}[node distance = 1.2cm]
        \node[circle, draw, text centered] (X) {$x$};
        \node[circle, draw, below of = X] (R) {$r$};
        \node[ circle, draw, right of = X, text centered] (y) {$y$};
        \node[circle, draw, left of = X, text centered] (e) {$\env$};

        \draw[->, line width = 1] (X) -- (y);
        \draw[teal, ->, line width = 1,transform canvas={xshift=1.5mm}] (X) -- (R);
        \draw[brown, ->, line width = 1,transform canvas={xshift=-1.5mm}] (R) -- (X);
        \draw[->, line width = 1] (e) -- (X);
        \draw[->, line width = 1] (R) -- (y);
        \draw[<->, line width = 1, dash dot] (e) to[out=90, in=90, distance=0.5cm] (y);
    \end{tikzpicture}
    \end{center}
    \caption{(b)}\label{graph-causal-2}
    \end{subfigure}
    \caption{
    Simplified graphs describing users of different:
    (a) \emph{classes}: \textcolor{blue}{\textit{causal}} or \textcolor{red}{\textit{anti-causal}}, and
    (b) \emph{subclasses}: \textcolor{teal}{\textit{believer}} or \textcolor{brown}{\textit{skeptic}}
    (here shown for a causal user).
    Dashed lines indicate possible spuriousness
    (e.g., via selection). 
    }
    \label{graph-causal}
    \vspace{-2mm}
\end{wrapfigure}
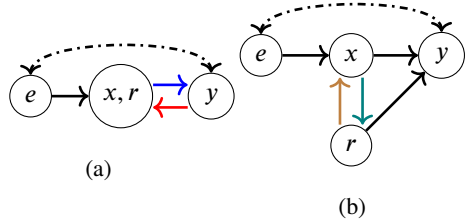

To encourage predictions $\yhat$ to be invariant to changes in the recommendation environment $\env$,
our approach enforces independence between $\yhat$ and $e$ (possibly given $y$).
This is achieved through regularization,
which penalizes predictive models for relying on $\env$ \citep{gretton2012kernel}.
In general, different graphs require different regularization schemes \citep{veitch2021counterfactual};
in our context, this would seem to imply that invariant learning requires
precise information regarding each user's causal graph.
However, our key observation is that, for user graphs,
it suffices to know which of two \emph{classes} the graph belongs to---\emph{causal} or \emph{anti-causal}--- determined by the direction of edges between $x,r$ and $y$ (Fig. \ref{graph-causal-1}).
Thus, correctly determining which regularization to apply
requires only minimal graph knowledge. 

Nonetheless, more fine-grained information can still be useful.
We show the following novel result:
if two users generate the same data, but differ in their underlying graph,
they will have different optimal \emph{out-of-distribution} predictive models
(despite sharing the same optimal in-distribution model).
The reason for this is that, to achieve robustness, regularizing for independence will
result in the discarding of different information for each user.
Operationally, this means that learning should include different models for each user-type
(not doing so implicitly constrains the models to be the same).
Here again we show that minimal additional information is useful,
and focus on subclasses of graphs that differ only in the direction of the edges between $x$ and $r$, which give rise to two user subclasses:
\textcolor{teal}{\textit{believers}} and \textcolor{brown}{\textit{skeptics}}
(Fig. \ref{graph-causal-2}).
Nonetheless, our result on differing optimal models
applies more broadly,
and may be of general interest for causal learning.

We end with a thorough empirical evaluation of our approach (\S \ref{sec:exp}),
where we explore the benefits
of different forms of knowledge regarding 
users' causal beliefs:
whether they are \textit{casual} or \textit{anti-causal},
and 
whether they are
{\textit{believers}} or {\textit{skeptic}s}.
Our results show that learning in a way that accounts for users' causal perception has significant advantages on out-of-distribution tasks.
We also study the degree to which imprecise graph knowledge is useful;
our results here show that even a rough estimate of a user's class
is sufficient for improved performance,
suggesting that our approach can be effectively applied on the basis of domain knowledge or reasonable prior beliefs.
Conversely, our results also imply that \emph{not} accounting for causal aspects of user decision-making,
or modelling them wrongly, can result in poor out-of-distribution performance. 
As systems often also play a role in determining what information is presented to users (e.g., providing  $r$),
understanding possible failure modes---and how to obtain robustness---becomes vital. 

\textbf{Broader aims. }
We aim to promote within machine learning the idea of modeling users as active and autonomous decision-makers,
with emphasis on capturing realistic aspects of human decision-making.
Our approach crucially hinges on modelling users as decision-makers that
(i) reason \emph{causally},
(ii) are \emph{boundedly-rational},  and
(iii) must cope with \emph{uncertainty}.
As we will show,
all three are key to our framework, and operate in unison.
Each of the aspects above relates to one of three main pillars on which modern theories of decision-making stand \cite{spiegler2011bounded},
thus blending three different fields---discrete choice (economics),
causal modeling (statistics),
and domain generalization (machine learning).

\if{
\section{User modelling} \label{sec:user_modelling}

Our approach relies extensively on careful user modelling,
and in particular, adheres to three key principles that guide the way in which we consider
users as decision-makers.

\textbf{Causal reasoning.}
Humans view the world around them through a causal lens,
and this is reflected in their behavior.
Thus, in domains where labeled data consists of human inputs,
there is no single, `true' underlying data generating process;
rather---all is in the eye of the beholder.
One implication of this is that different users may have different causal views of the world
(as demonstrated above).
Thus, even if two users are identical in their (true) preferences---they may differ in how they respond to \emph{change} (for example, a sudden decrease in $r$).
Hence, for providing \emph{robust personalization}, learning must consider users as \emph{causal decision-makers} \tocite.

\textbf{Uncertainty.}
As we will show,
causal modelling becomes effective once \emph{uncertainty} affects decisions
(this happens when there is an `unblocked' causal path from the source of uncertainty to the variable encoding user value).
It is only by modelling what users don't know---and how they handle this uncertainty---that robustness to change is made possible.
Modelling user-side uncertainty is important since
uncertainty is mediated by the system (as done by $r$ in the example above),
but more importantly, because reducing this uncertainty should be a primary goal.
Nonetheless, to the best of our knowledge, and despite being a fundamental element in all major theories of decision-making \tocite,
the notion of user uncertainty has been largely overlooked in machine learning
(e.g., click models \todo{elaborate}).
One of our goals in this work is therefore to advocate for modelling users
as \emph{decision-makers under uncertainty}.

\textbf{Bounded rationality.}
In most reasonable use cases,
users do not (or cannot) have access to the actual causal mechanics of the environment.
Thus, they must ground their reasoning on some approximation;
as decision-makers, this makes them \emph{boundedly-rational} \tocite. 
In our example, different users have different beliefs regarding $r$---whereas 
in reality it may simply be the output of a predictive model, or a simple aggregate statistic
(indeed, confusing correlation with causation is a notorious human fallacy \tocite).
Note that uncertainty alone does not make users non-rational; 
on the contrary---uncertainty is key to many rational decision models
(a prime example is the influential Random Utility Theory \tocite).
However, under rational behavior, uncertainty often results in increased variance,
whereas under irrational behavior, uncertainty results in bias---systematic,
predictable, and hence, exploitable---a longstanding hallmark of behavioral economics (\tocite).
Our approach makes use of this bias as it manifests
in how users causally reason about value and uncertainty to provide robustness.


\textbf{Comparison.} 
To see the merits of our approach,
consider by comparison the conventional approach for predicting user choices,
in which the goal is to learn personalized score functions $f_u(x)$
\todo{do we really want to use personalized here? maybe just $f(u,x)$, or even just $f(x)$}
such that
$f_u(x)>0$ whenever user $u$ chooses item $x$. 
Note that as a model of user choice behavior, this is akin to assuming rationality:
$x$ carries information regarding the intrinsic value of the item to $u$,
and $u$ chooses $x$ only if it provides non-negative utility
(as depicted by $f_u(x)$).
This approach can easily be extended to include uncertainty
(e.g., by including a additive noise term $f_u(x)+\varepsilon$, as done in discrete choice modelling \tocite)---but this mostly leads to enlarged variance.
System inputs such as relevance scores can also be included via
$f_u(x,r)$, but these simply act as additional features,
and do not capture their effect on how users perceive value.
Finally, and most importantly, the predictor $f_u(x)$ is innately \emph{correlative},
and so loses any of its guarantees once the input distribution changes.
}\fi



	%

\if
\orange{
	Recent advances in predictive modelling have percolated into recommendation systems and online platforms, leading to substantial improvements in their performance and to wide deployment. Such platforms attempt to generate personalized item recommendations by training on large scale offline data. As they rely on logged data, these systems often assume users are passive players, and rely on correlations between items, user attributes and decisions made in their predictive models \cite{krauth2020offline}.
	
	However, in reality users often have different beliefs regarding the generating process of the recommendations, and use these recommendations in various different ways. For example, some users might fail to recognize that a recommendation system is merely calculating correlations between attributes and actions, and imagine causal decisions being made by the recommendation engine. Alternatively, users might imagine that the system is trying to push forward some preferred content and modify its descriptions of it. Such beliefs could alter the decisions users make, and in return affect the recommendation system using those decisions as signals during training.
	
	The varying beliefs users might hold regarding the recommendation system is a direct result of the uncertainty they face when making decisions. Even if the system is transparent in the way it produces its recommendations, users might still believe that in reality a different unseen process is what governs the system's output. The key point is that even if the true world model is known, what matters is how the users observe the world and the system. Indeed, this realization is a foundation of the bounded rationality literature \cite{spiegler2020behavioral}.
	
	The simplifying assumptions made by existing recommendation systems often do not take into account the interaction between the system and different users \cite{mladenov2021recsim}. Such differences could be thought of as alternative world models, where each type of user assumes a different role for the recommendation system in her decision making process. Without accommodating these different beliefs, the system would not be able to learn a faithful mapping between users, items and decisions. 
	
	While assuming that users imagine a passive system might not hurt predictive performance on items sample i.i.d. from the same distribution observed during the offline training, in realistic scenarios where both the distribution of items and users shifts constantly it could hurt performance. Such performance drop could happen when there are spurious correlations in the training data, an association between some random variable and users' decision that holds no causal information about their action and might not hold when the distribution shifts in test time. For example, a user might believe that the recommendation system is pushing their products, giving them a higher ranking or placing them higher in a list. As such, she might choose not buy them regardless of their quality. Regardless if the system is pushing its content in such a way, what matters is that the user believes that way. Without knowing this causal user belief, the system might learn to associate its products with lower satisfaction, an empirical regularity that will not hold when faced with users that don't share this belief.
	
	One approach to modelling users' beliefs and the differences in their world models are causal graphs \cite{pearl2009causality}. Causal graphs are probabilistic graphical models, used to encode assumptions about the data-generating process. They can also be viewed as a blueprint of the algorithm by which Nature assigns values to the variables in the domain of interest. In our context, users define the world in which systems operate, meaning that user's causal graphs can be similarly viewed as a blueprint of Nature's algorithm. 
	
	In this work, we utilize causal graphs to crystallize the effect of users' beliefs, modeled as causal graphs, on recommendation systems. Specifically, we define a setup where it is possible to measure the effect of an alternative user belief on the types of recommendations provided by the system and therefore its predictive performance \ref{sec:causal_model}. We imagine different types of users, encode their beliefs with a causal graph, and discuss the effect such beliefs might have on a system that interacts with this user type.
	
	To demonstrate the power of our modelling approach on the predictive power of a recommendation system, we cast the problem of generalizing to unseen items as an out-of-distribution (OOD) generalization task (Section \ref{sec:learning}). In such a task, a system learns from multiple item distributions, and is tested on a new, unseen distribution of products. Leveraging recent advancements in \textit{invariant learning} \cite{wald2021calibration, veitch2021counterfactual}, we show that systems that take into account the causal beliefs of its users better generalizes to OOD examples (Section \ref{sec:exp}). Following that, we show that in realistic settings, where both users and items may vary in test-time, customizing predictions according to user beliefs is the best strategy. Finally, we discuss the power of causal modelling to represent user behavior and its potentially usefulness in dealing with users with diverse beliefs (Section \ref{sec:disc}).
}
\fi
\section{Related Work}
\label{sec:related}





\textbf{Causality and Recommendations.}
Formal causal inference techniques have been used extensively in many domains, but have only recently been applied to recommendations \cite{liang2016causal, wang2018deconfounded, bonner2018causal, zhang2021causal, wang2021deconfounded}. \citet{liang2016modeling} use causal analysis to describe a model of user exposure to items. Some work has also been done to understand the causal impact of these systems on behavior by finding natural experiments in observational data \cite{sharma2015estimating, su2016effect, schnabel2016recommendations}, and through simulations \cite{chaney2018algorithmic, schmit2018human}. \citet{bottou2013counterfactual} use causally-motivated techniques in the design of deployed learning systems for ad placement to avoid confounding. 
As most of this literature addresses selection bias and the effect of recommendations on user behavior \cite{bonner2018causal, zhang2021causal, wang2021deconfounded}, there is no work, as far as we know, that models boundedly rational agents interacting with a recommender system. Moreover, we are the first to propose modeling users' (mis)perceptions about the recommendation generation process using causal graphs.




\textbf{Bounded Rationality and Subjective Beliefs.}
The bounded rationality literature focuses on modelling agents that make decisions under uncertainty, without the ability to fully process the state of the world, and therefore hold subjective beliefs about the data-generating process.
\citet{eyster2005cursed} defined \textit{cursed beliefs}, which capture an agent's failure to realize that his opponents' behavior depends on factors beyond those he is informed of. Building on \citet{esponda2016berk}, who modelled equilibrium beliefs under misspecified subjective models, \citet{spiegler2020behavioral} used causal graphs to analyze agents that impose subjective causal interpretations on observed correlations. 
This work lays the foundation upon which we model users here, and has sprouted many interesting extensions \cite{eliaz2020cheating, eliaz2021strategic}.


\textbf{Causality and Invariant Learning.}
Correlational predictive models can be untrustworthy~\cite{jacovi2021formalizing}, and latch onto spurious correlations, leading to errors in OOD settings~\cite{mccoy2019right, feder2021causalm, feder2021causal}. This shortcoming can potentially be addressed by a causal perspective, as knowledge of the causal relationship between observations and labels can be used to mitigate predictor reliance on them~\cite{buhlmann2020invariance,veitch2021counterfactual}. In our experiments, we learn a representation that is invariant to interventions on the `environment' $\env$, a special case of an invariant representation \cite{arjovsky2019invariant, krueger2020out, bellot2020generalization}. Learning models which generalize OOD is a fruitful area of research with many recent developments \cite{magliacane2018domain,heinze2018invariant,peters2016causal, subbaswamy2019preventing, ben2021pada, wald2021calibration}. Recently, \citet{veitch2021counterfactual} showed that the means and implications of invariant learning depend on the data's true causal structure. Specifically, distinct causal structures require distinct regularization schemes to induce invariance. 
\section{Modelling Approach}
\label{sec:setting}

\subsection{Learning Setting}
In our setting, data consists of users, items, and choices.
Users are described by features $u \in \R^{d_u}$,
and items are described by two types of features:
intrinsic item properties, $x \in \R^{d_x}$
(e.g., movie genre, plot synopsis, cast and crew),
and information provided by the platform, $r \in \R^{d_r}$
(e.g., recommendation score, user reviews).
We will sometimes make a distinction between features that are available to users, and those that are not;
in such cases, we denote unobserved features by $\xbar$,
and with slight abuse of notation, use $x$ for the remaining observed features
(we assume $r$ is always observed).
Choices $y \in \{0,1\}$ indicate
whether a user $u$ chose to interact (e.g., click, buy, watch) with a certain item $(x,r)$.
Tuples $(u,x,r,y)$ are sampled iid from
certain unknown joint distributions, which we define next.

As we are interested in robustness to distributional change,
we follow the general setup of \emph{domain generalization}
\cite{blanchard2011generalizing, koh2020wilds, ben2021pada, wald2021calibration}
in which there is a collection of environments, denoted by a set $\envs$,
and each environment $\env \in \envs$ defines a 
different joint distribution $
\dist^{\env}$ over $(u,x,r,y)$.
We assume there is training data available
from a subset of $K$ environments,
$\envstrn = \{\env_1,\dots,\env_K\} \subset \envs$,
with datasets $\smplst_k = \{(u_{ki},x_{ki},r_{ki},y_{ki})\}_{i=1}^{m_k}$
drawn i.i.d from the corresponding $\dist^{\env_k}$.
We denote the
pooled training distribution by $\disttrn = \cup_{\env \in \envstrn} \dist^e$
and the pooled training data by $S=\cup_k S_k$ with $m=\sum_k m_k$.

Our goal is to learn a robust predictive model
$\yhat=f(u,x,r; \theta):=f_u(x,r; \theta)$ with parameters $\theta$;
the $f_u$ notation will be helpful in our discussion of robustness
as it emphasizes our focus on individual users.
We now turn to define the precise type of robustness that we will be seeking.


\textbf{Robustness via causal graphs.}
The type of robustness that we would like our model to satisfy is \emph{counterfactual invariance} (CI) \cite{veitch2021counterfactual}. Denoting $x(e), r(e)$ as the counterfactual features that would have been observed had the environment been set to $e$, this is defined as:
\begin{definition}
A model $f_u$ is 
CI if $\forall e,e'\in{\envs}$
it holds a.e. that
$f_u(x(e'),r(e');\theta) = f_u(x(e),r(e);\theta)$. 
\end{definition}

The challenge in obtaining CI predictors is that at train time we only observe a subset of the environments, $\envstrn \subset \envs$, while CI requires independence to hold for \emph{all} environments $\env \in \envs$.
To reason formally about the role of $\env$ in the data generating process, and hence about the type of distribution shifts under which our model should remain invariant,
it is common to assume that a causal structure
underlies data generation \cite{heinze2018invariant, arjovsky2019invariant}.
This is
often modeled as a (directed) causal graph \cite{pearl2009causality};
robustness is then defined as insensitivity of the predictive model
to changes (or `interventions') in the variable $\env$,
which can trigger changes in other variables
that lie `downstream' in the graph.
To encourage robustness,
a common approach is to construct a learning objective
that avoids spurious correlations by
enforcing certain conditional independence relations to hold,
e.g., via regularization (see \S \ref{sec:learning}).
The question of \emph{which} relations are required can be answered
by examining the graph
and the conditional independencies it encodes
(between $\env,x,r,y$, and $ \yhat$).
Unfortunately, inferring the causal graph is in general hard;
however, determining the `correct' learning objective
may require only partial
information regarding the graph.
We will return to the type of information we require for our purposes,
and the precise ways in which we use it,
in \S \ref{sec:learning}.

\subsection{Users as decision makers}

Focusing on relevance prediction,
at the heart of our approach lies the observation that
what underlies the generating process of data, and in particular of labels,
\emph{is the way in which users causally perceive the environment}.
In this sense, users \emph{are} the causal mechanism,
and their causal perceptions manifest in their decision-making.
Operationally, we model users as acting on the basis of
\emph{individualized causal graphs} \cite{spiegler2020behavioral}
that define how changes in one variable
propagate to influence others,
and ultimately---determine choice behavior.
This allows us to anticipate, target, and account for
sources of spuriousness.

\textbf{Rational users.}
To see how modeling users as causal decision-makers can be helpful,
consider first a conventional `correlative'
approach for training $f_u$,
e.g. by minimizing the loss of a corresponding score function $v_u(x,r)=v(u,x,r)$,
and predicting via 
$\yhat = \argmax_{y\in\{0,1\}} y v_u(x,r) = \one{v_u(x,r) > 0}$.
From the perspective of user modeling,
$v_u$ can be interpreted as a personalized `value function';
this complies with classic Expected Utility Theory (EUT) \cite{thurstone1927law},
in which users are modeled as rational agents
acting to maximize (expected) value, and under \emph{full information}.\footnote{For simplicity, here and throughout we consider deterministic valuations, although this is not necessary.}
From a causal perspective, this approach is equivalent to assuming a graph in which all paths from $e$ to $y$ are blocked by $x,r$---which is akin to assuming no spurious pathways, and so handicaps the ability to avoid them.

\textbf{Boundedly-rational users.}
We propose to model users as boundedly-rational decision-makers,
under the key assertion that users' decisions take place
under inherent \emph{uncertainty}.
Uncertainty plays a key role in how we, as humans, decide:
our actions follow not only from what we know,
but also from how we account for what we don't know.
Nonetheless, and despite being central to most modern theories of decision making \cite{kahneman1982judgment}---and despite being a primary reason for why users turn to online informational services like recommendation systems in the first place---explicit modeling of user-side uncertainty is currently rare within machine learning \cite{plonsky2019predicting,apel2020predicting,raifer2021designing};
here we advocate for its use.

Our modeling approach acknowledges that users \emph{know} some features are unobserved, and that this influences their actions.
Here we demonstrate how this relates to robust learning
through an illustrative example using a particular behavioral model,
though as we will show, our approach applies more broadly.
Consider a user shopping online for a vintage coat,
and considering whether to buy a certain coat.
The coat's description includes
several intrinsic properties $x$
(e.g., the coat's material),
as well as certain platform-selected information $r$
(e.g., a stylized photo of a vintage-looking coat).
The user wants to make an informed decision,
but knows some important information, $\xbar$, is missing
(e.g., the year in which the coat was manufactured).
If she is concerned about buying a modern knockoff
(rather than a truly vintage coat),
how should she act?
A common approach is to extend EUT to support uncertainty
by modelling users as integrating subjective beliefs
about unobserved variables, $p_u(\xbar|x,r,\env)$,
into a conditional estimate of value, $\vtilde_u(x,r|\env)$,
over which choices $y$ are made:
\begin{equation}
\label{eq:value_decomp}
\vtilde_u(x,r|\env) 
= \sum_\xbar v_u(x,\xbar,r) p_u(\xbar|x,r,\env), \qquad \quad
y= \one{\vtilde_u(x,r|\env)>0}
\end{equation}
Here, $p_u(\xbar|x,r,\env)$ describes a user's (probabilistic) belief regarding the conditional likelihood of each $\xbar$ 
(is the vintage-looking coat truly from the 60's?),
and $v_u(x,\xbar,r)$ describes the item's value to the user
\emph{given} $\xbar$ 
(if the coat really is from the 60's---how much is it worth to me?).
Importantly, note that uncertainty beliefs $p_u(\xbar|x,r,\env)$
can be environment-specific
(i.e., the degree of suspicion regarding knockoffs can vary across retailers).
In turn, value estimates $\vtilde$ and choices $y$ can also rely on $\env$
(note that $\vtilde$ can also be interpreted as the conditional
expected value, $\vtilde_u(x,r|e) = \expect{p_u(\cdot|\env)}{v|x,r}$).
Since $y$ is a deterministic function of $\vtilde$ (Eq. \eqref{eq:value_decomp}),
for clarity (and when clear from context)
we will ``skip'' $\vtilde$ and refer to $y$ as a direct function of $x,r$, and $\env$. 

\begin{figure*}[t!]
    \centering
    \includegraphics[width=\textwidth]{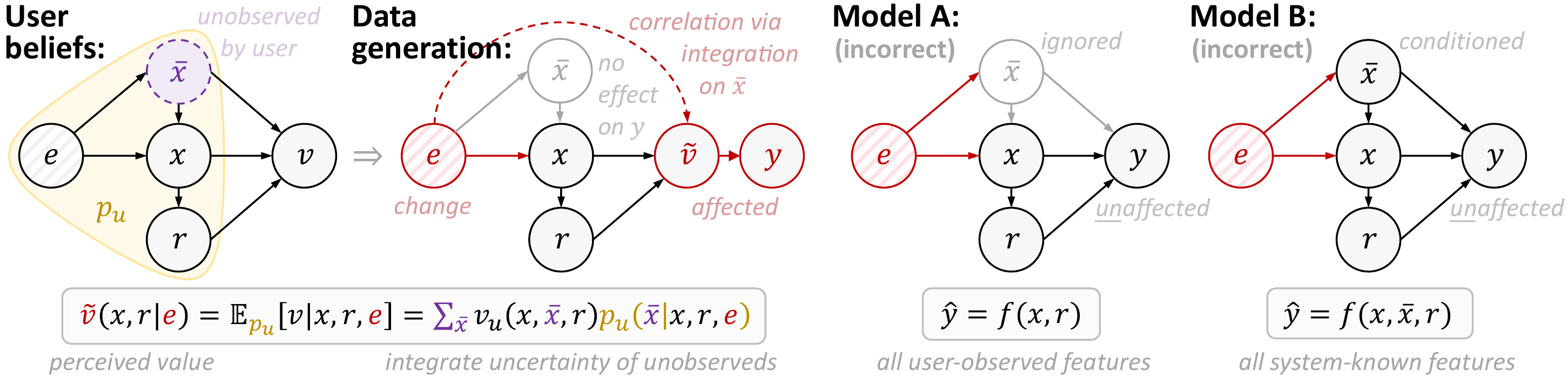}
    \caption{
    \textbf{(Left)} User beliefs and resulting data generation process of $y$. 
    The user does not observe $\xbar$, but knows it is missing.
    To compensate for this uncertainty, the user integrates over $\xbar$ w.r.t. probabilistic beliefs $p_u$, which can depend on $\env$;
    this forms her perceived value $\vtilde$, which determines her choice $y$. Integrating uncertainty can introduce correlation between $\env$ and $\vtilde$ (and hence $y$) through $p_u(\cdot|\env)$.
    Note $y$ does not depend on \emph{instances} of $\xbar$.
    \textbf{(Center)} Learning a predictive model $f(x,r)$ using only features observed by the user.
    Assuming that $\xbar$ can be discarded creates the impression that
    information cannot flow from $\env$ to $y$,
    implying (wrongly) that a {\naive}ly trained $f(x,r)$ would be robust.
    In practice, such an $f$ might incorrectly use $x,r$ to compensate for variation in $\env$.
    \textbf{(Right)} Learning a predictive model $f(x,\xbar,r)$ using all features available to the system.
    Assuming $\xbar$ affects $y$ implies (wrongly) that all paths from $\env$ to $y$ are blocked.
    {\Naive}ly training $f$ will likely use variation in $\xbar$ to explain $y$; this might improve performance on observed $\env$,
    but will not generalize to others.
    }
    \label{fig:user_graphs}
\end{figure*}

\textbf{Causal user graphs.}
One interpretation of Eq. \eqref{eq:value_decomp} is that users cope with uncertainty by employing \emph{causal reasoning} \citep{spiegler2020behavioral}, 
this aligning with a predominant approach in the cognitive sciences
that views humans as acting based on `mental causal models' \citep{sloman2005causal}.
Here we follow \citep{spiegler2020behavioral} and think of 
users as reasoning through personalized \emph{user causal graphs},
denoted $G_u$.
The structure of $G_u$ expresses $u$'s causal beliefs---namely
which variables causally affect others---and 
its factors correspond to the conditional terms 
($p_u$ and $v_u$) in Eq. \eqref{eq:value_decomp}.
A key modeling point is that
users can vary in their causal perceptions;
hence, different users may have different graphs that encode different conditional independencies,
these inducing different simplifications of the conditional terms.
For example, a user that believes movies with a five-star rating
($r$) are worthwhile regardless of their content ($x$)
would have $v_u(x,\xbar,r)$ reduced to $v_u(\xbar,r)$,
since $v \Vbar x | r$;
meanwhile, a user who, after reading a movie's description ($x$),
is unaffected by its rating ($r$),
would have $v_u(x,\xbar)$ instead, since $v \Vbar r | x$.

\subsection{User behavior and spurious correlations}
We are now ready to make the connection to learning.
Recall that our goal is to learn a predictor $f$
that is unaffected by spurious correlations,
and that these can materialize if
some mechanism creates an association between $\env$ and $y$;
we will now see how user behavior can play such a role.
Continuing our illustrative example,
assume that the beliefs of our 
boundedly-rational user (who chooses via Eq. \eqref{eq:value_decomp})
are encoded by the leftmost diagram in Fig. \ref{fig:user_graphs} (`User beliefs'). 
The diagram does not show an edge between $\env$ and $y$.
However, and crucially,
the user behaves `as if' there actually was an edge:
by accounting for uncertainty via integration,
$\xbar$ is effectively `removed' from the indirect path $\env {\rightarrow} \xbar {\rightarrow} y$,
which results in a direct connection between $\env$ and $y$.
The corresponding data-generation process of choices $y$
is illustrated in Fig. \ref{fig:user_graphs} (`Data generation').
This supports our main argument:
by making decisions, users can \emph{generate} spurious correlations in the data---here, by accounting for uncertainty.



The above has concrete implications on learning.
First,
it shows how conventional learning approaches can fail.
On the one hand, since users observe only $x$ and $r$,
one reasonable approach would be to discard $\xbar$ altogether, and train
a predictor $f_u(x,r;\theta)$ in hopes of mimicking user choice behavior.
This means learning as if there is no edge between $\env$ and $y$
(Fig. \ref{fig:user_graphs}, `Model A').
Under this (incorrect) assumption, for $f_u$ to be robust to variation in $\env$,
it would suffice to train using a conventional approach, e.g., vanilla ERM.
But the user \emph{knows} $\xbar$ exists, and by integrating beliefs,
relies on this for producing $y$---importantly, in a way that \emph{does} depend on $\env$.
This makes learning $f_u(x,r;\theta)$ prone to using $x$ and $r$ to compensate for the
constant effect of each train-time environment $\env_k$ on $y$.
By definition, once the environment changes,
a {\naive}ly trained $f_u$ cannot account for 
the new (residual) effect of $\env$ on $y$.


Conversely,
the system may choose to learn using all information that is available to it,
namely train a predictor $f_u(x,\xbar,r;\theta)$ (Fig. \ref{fig:user_graphs}, `Model B').
This make sense if the goal is in-distribution (ID) generalization.
But for out-of-distribution, this creates an illusion
that conditioning on $x,\xbar,r$ will block all paths from $\env$ and $y$, again making it tempting to (wrongly) conclude that applying ERM would suffice for robustness.
However, because the system does observe instances of $\xbar$
(note it still exists in the data generating process),
learning can now erroneously use the variation in $\xbar$ to explain $y$,
whereas the true $y$ does not rely on specific instantiations of $\xbar$.
As a result, a {\naive}ly learned $f_u$ will likely overfit to training distributions in $\envstrn$, and may not generalize well to new environments.

Second, 
the awareness to how users account for uncertainty
suggests a means to combat spuriousness.
Eq. \eqref{eq:value_decomp} shows that $y$ depends on $x,r$, and $e$;
hence, since our goal is to discourage the dependence of $\yhat$ on $\env$,
it follows that (i) functionally, $f_u$ should not depend on $\xbar$, 
but (ii) $f_u$ should be learned in a way that controls for (conditional) variation in $\env$.
In our example, this manifests in the role of $\xbar$:
at \emph{train-time} use $\xbar$ (perhaps indirectly)
to learn a function that at \emph{test-time} does not rely on it
(c.f. Model B which uses $\xbar$ for both train and test,
and Model A which does not use $\xbar$ at all).\footnote{The careful reader will notice that Fig. \ref{fig:user_graphs} reveals how spuriousness can arise, but not yet how it can be handled.
Indeed, the latter requires additional structure
(which the figure abstracts)
that is described in Sec. \ref{sec:learning}.}
Since the precise way in which $\env$
relates to other variables is determined by the user graph $G_u$
(which determines conditional independencies),
knowledge of the graph should be useful in
promoting invariance.
In the next section we describe what knowledge is needed,
and how it can be used for invariant learning.

\section{Learning With Causal User Models}
\label{sec:learning}

Our approach to robust learning is based on
regularized risk minimization,
where regularization acts to discourage variation in predictions across environments \cite{veitch2021counterfactual, wald2021calibration}.
Our learning objective is:
\begin{equation}
\label{eq:learning_objective}
\argmin_{f \in F} \loss(f; S)
+ \lambda \reg(f;\smplst_1,\dots,\smplst_K)
\end{equation}
where $F$ is the function class,
$\loss$ is the average loss w.r.t an empirical loss function (e.g., log-loss),
and $\reg$ is a data-dependent regularization term with coefficient
$\lambda$.
In our approach, the role of $\reg$ is to penalize $f$
for violating certain statistical independencies;
the question of \emph{which} independencies should be targeted---and hence the precise form that $R$ should have---can be answered by the underlying causal graph \citep{veitch2021counterfactual}.
Knowing the full user graph (e.g. detailed causal relations between different features within $x$, such as whether the genre of a movie is a cause for its production quality) can certainly help 
, but relying on this (and at scale) is impractical.
Luckily, as we show here,
coarse information regarding the graph can be translated into necessary conditions for distributional robustness, and in \S \ref{sec:exp} we will see that these can go a long way towards learning robust models in practice.

Note that by choosing to promote robustness through regularization,
our approach becomes 
agnostic to the choice of function class $F$ (although the graph may also be helpful in this\footnote{E.g., if we aim to learn parameterized predictors on the basis of Eq. \eqref{eq:value_decomp}, graphs can help discard dependencies.}).
We also need not commit to any specific behavioral choice model.
This allows us to abstract away from the particular behavioral mechanism that generates spuriousness (e.g., integration of $\xbar$),
and consider general relations between $\env$ and $y$
(e.g., selection or common cause); we make use of this in our experiments.



\textbf{Regularization schemes.}
We focus on two methods for promoting statistical independence:
MMD \cite{gretton2012kernel}, which we present here;
and CORAL \cite{sun2016deep}, which we describe in Appendix \ref{sec:app_learning}
(we use both in our experiments).
The MMD regularizer applies to models $f$ that can be expressed as a
predictor $h$ applied to a (learned) representation mapping $\phi$,
i.e., $f=h \circ \phi$ (note $h$ can be vacuous).
MMD works by encouraging the (empirical) distribution of representations for each environment $e_k$
to be indistinguishable from all others;
this is one way to express the independence test for $\yhat$ and $e$ \cite{veitch2021counterfactual}.
In our case, for a single $e_k$, MMD is instantiated as:
\begin{equation}
\MMD(\Phi_k, \Phi_{-k}),   \,\,\, \text{where} \quad
\Phi_k = \{\phi(x):x \in S_k\}, \,\,
\Phi_{-k} = \{\phi(x):x \in S \setminus S_k\}.
\end{equation}
As we show next, the precise way in which MMD is used for regularization depends on the graph.

\subsection{User graph classes: causal vs. anti-causal}

Following our example in Fig.\ref{graph-causal-1}, consider users of two types:
a `causal' user $\uc$ that believes value is an \emph{effect} of an item's description (i.e., $D^e(x,r,y \mid u=\uc)$ is entailed by the graph $x,r \textcolor{blue}{\rightarrow} y$ for each $e\in{\envs}$),
and an `anti-causal' user $\uac$ that believes the item's value \emph{causes} its description
(i.e., $D^e(x,r,y \mid u=\uac)$ is entailed by $x,r \textcolor{red}{\leftarrow} y$ respectively).\footnote{Technically, anti-causal users have $x,r\textcolor{red}{\leftarrow} \vtilde$, 
and $y$ as a function of $\vtilde$,
but we use $x,r \textcolor{red}{\leftarrow} y$ for consistency.}
Our next result shows that:
(i) $\uc$ and $\uac$ require \emph{different} regularization schemes;
but (ii) the appropriate scheme is fully determined 
by their type---\emph{irrespective} of any other properties of their graphs.
Thus, from a learning perspective, it suffices to know
which of two classes a user belongs to: causal, or anti-causal.
\begin{proposition}
\label{prop:class}
Let $f$ be a CI model and assume $y$ and $\env$ are confounded (e.g., $\env{\rightarrow}y$ exists),
then:
    \setlist{nolistsep}
    \begin{enumerate}[noitemsep,leftmargin=0.65cm,label=(\arabic*)]
    \item $f_{\uc}$ must satisfy
    $\prob{D^e}{f_{\uc}(x,r)} = \prob{D^{e'}}{f_{\uc}(x,r)}\,\,\, \forall e,e'\in{\envs}$
    \item $f_{\uac}$ must satisfy
    $\prob{D^e}{f_{\uac}(x,r) \mid y} = \prob{D^{e'}}{f_{\uac}(x,r) \mid y} \,\,\, \forall e,e'\in{\envs}$, $y\in{\{0, 1\}}$
    \end{enumerate}
On the other hand, $f_\uc$ need not necessarily satisfy (2),
and $f_\uac$ need not necessarily satisfy (1).
\end{proposition}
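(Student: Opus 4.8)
I would establish the two non-implications by exhibiting explicit minimal structural causal models (SCMs) --- one with a causal user, one with an anti-causal user --- in which a genuinely counterfactually-invariant predictor satisfies the condition forced on it by parts~(1)--(2) but violates the condition associated with the \emph{other} class. The conceptual engine is the characterization underlying parts~(1)--(2): a CI model can only read off the component of $(x,r)$ left fixed by the $\env$-counterfactual, call it $(x,r)^{\perp\env}$, and the separation claim is exactly that the two invariances ``$(x,r)^{\perp\env}\perp\env$'' and ``$(x,r)^{\perp\env}\perp\env\mid y$'' are incomparable in general.

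\emph{$f_\uc$ need not satisfy~(2).} Take $\envs=\{\env,\env'\}$, let $x=(s,x_\env)$ where $s\perp\env$ and $x_\env$ is a coordinate the platform does alter, let $r$ be arbitrary, and let $y=\one{s+\alpha\env+\xi>0}$ with $\xi$ independent continuous noise, $\alpha\neq0$, and the two environments coded as $0,1$. This is realizable within the causal-user class (graph $x,r\to y$) with $\env\to y$, so $\env$ and $y$ are confounded as the hypothesis requires. Put $f_\uc(x,r)=s$: it is CI, as it discards every $\env$-descendant, and by part~(1) --- equivalently, since $s\perp\env$ --- it satisfies~(1). But $y$ is a collider on $s\to y\leftarrow\env$, so conditioning on it renders $s$ and $\env$ dependent; a one-line Bayes computation gives $\prob{D^\env}{s\mid y=1}\neq\prob{D^{\env'}}{s\mid y=1}$ for any $\alpha\neq0$, so~(2) fails.

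\emph{$f_\uac$ need not satisfy~(1).} A subtlety here forces the confounding to be realized by a latent common cause rather than by an edge $\env\to y$: were $\env$ to cause $y$, every component of $(x,r)$ would be an $\env$-descendant and the only CI models would be degenerate (and would satisfy~(1) vacuously). So take a latent $U$ with $U\to\env$ and $U\to y$ --- still leaving $\env$ and $y$ confounded, which is what the hypothesis permits --- let $x=y+\eta$ with $\eta$ independent of $(U,\env)$, and $r$ arbitrary; this is realizable within the anti-causal class (graph $x,r\leftarrow y$). Put $f_\uac(x,r)=x$: it is CI, since changing $\env$ leaves $y$, hence $x$, unchanged, and by part~(2) --- equivalently, since given $y$ the law of $\eta$ is free of $\env$ --- it satisfies~(2). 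But $x$ transmits information about $y$, and $y\not\perp\env$ because both are children of $U$, so $x\not\perp\env$; comparing $\prob{D^\env}{f_\uac}$ with $\prob{D^{\env'}}{f_\uac}$ yields distinct laws, so~(1) fails.

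Together the two SCMs give the claim. The only real obstacle is the bookkeeping in the anti-causal construction: one must check that the exhibited $f_\uac$ is \emph{honestly} CI under Definition~1 while the instance still meets the proposition's confounding hypothesis --- which, as noted, is incompatible with the $\env\to y$ realization and is precisely why the latent-confounder version is needed. The parallel check in the causal construction is immediate, and in both cases keeping $\alpha$ (respectively the $U$-induced dependence) bounded away from $0$ makes the violated independence strict.
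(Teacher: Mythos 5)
Your proposal is correct in substance, but it routes the negative half of the claim differently from the paper, and it leaves the positive half largely implicit. For parts (1)--(2) the paper does not re-derive anything either: it states the structural assumption (the decomposition of $(x,r)$ into $(x,r)_{y\wedge e},(x,r)^{\perp}_y,(x,r)^{\perp}_e$ and the expanded causal/anti-causal graphs), invokes Lemma~3.1 of \citet{veitch2021counterfactual} to identify CI models with $(x,r)^{\perp}_e$-measurable functions, and then reads the marginal (causal) versus conditional-on-$y$ (anti-causal) independence off the graph via their Theorem~3.2; your plan asserts that the characterization ``forces'' (1)--(2) without carrying out that graph-reading step, which is where the class-specific content actually sits, so if you write this up you should make that step explicit (or cite it). For the ``need not satisfy'' half, the paper argues generically: assuming faithfulness, confounding of $y$ and $\env$ makes $(x,r)^{\perp}_e$ dependent on $\env$ given $y$ (causal user) or marginally (anti-causal user), so some $(x,r)^{\perp}_e$-measurable predictor violates the other condition. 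You instead exhibit two explicit SCMs---a collider $s\rightarrow y\leftarrow\env$ with $f_{\uc}=s$, and a common-cause model $U\rightarrow\env$, $U\rightarrow y$, $x=y+\eta$ with $f_{\uac}=x$---which instantiate exactly the same two mechanisms (collider opening under conditioning; marginal dependence transmitted through $y$) but without needing a faithfulness assumption, at the price of establishing only specific instances rather than genericity; since the claim is an existence/non-necessity statement, that suffices. Your observation that the anti-causal counterexample cannot be realized with a directed edge $\env\rightarrow y$ (otherwise CI predictors degenerate and (1) holds vacuously), forcing the latent-common-cause form of confounding, is a genuinely useful point the paper leaves implicit in its graph assumptions. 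One small overclaim to repair: the collider violation of (2) does not hold for literally ``any'' independent continuous noise $\xi$ and any $\alpha\neq 0$ (exponential-type tails can make the likelihood ratio in $s$ constant, so conditioning on $y$ creates no dependence); fix a concrete choice such as Gaussian $\xi$ and a two-point or Gaussian $s$ and verify the posterior difference directly, and likewise state explicitly that the $U$-induced dependence between $\env$ and $y$ is non-degenerate. These are choices to pin down, not gaps in the argument.
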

If we fail to enforce these constraints during learning, then we will not learn a CI classifier. On the other hand, enforcing unnecessary constraints (e.g.,  requiring both conditions hold for $f_{\uc}$ and $f_{\uac}$) restricts our hypothesis class and hence limits performance.
The proof follows directly from \cite{veitch2021counterfactual} (under technical assumptions; see Appendix \ref{sec:app_theory}).
The distinction between causal and anti-causal is fundamental in causality \citep{scholkopf2012causal};
for our purposes, it prescribes the appropriate regularization.
\begin{corollary}
\label{cor:reg}
For any user $u$,
to encourage $f_u(x, r)$ to be invariant to changes in $\env$, set:
\begin{equation}
\label{eq:reg_mmd}
R(f;\smplst) =
\begin{cases}
\sum_k \MMD(\Phi_{k, u}, \Phi_{-k, u}) & u \textup{ is causal} \qquad\qquad\qquad \darkgray{\text{(marginal MMD)}} \\
\sum_y \sum_k \MMD(\Phi_{k, u}^{(y)}, \Phi_{-k, u}^{(y)}) & u \textup{ is anti-causal} \qquad\qquad\, \darkgray{\text{(conditional MMD)}}
\end{cases}    
\end{equation}
\vspace{-2mm}
where $\Phi_{k,u}, \Phi_k^{(y)}$ includes the subset of examples with user $u$ and label $y$, respectively.
\end{corollary}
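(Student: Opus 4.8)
The plan is to obtain the corollary directly from Proposition \ref{prop:class} by translating its two distributional-equality conditions into MMD penalties. The key fact I would invoke is that, for a characteristic kernel, the maximum mean discrepancy is a metric on probability measures: $\MMD(P,Q)=0$ if and only if $P=Q$. Proposition \ref{prop:class} already pins down \emph{which} measures must coincide for a CI model --- the \emph{marginal} law of $f_\uc(x,r)$ across environments when $u$ is causal, and the \emph{conditional} law of $f_\uac(x,r)$ given $y$ across environments when $u$ is anti-causal. At a high level, then, the corollary is the statement that the zero set of the displayed regularizer coincides (in the population limit) with the set of models satisfying the corresponding condition of Proposition \ref{prop:class}, so that penalizing $R$ is exactly the right way to push toward the necessary CI conditions.

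First I would reduce the conditions, which Proposition \ref{prop:class} states on $f_u$, to conditions on the representation $\phi$. Since we work with $f=h\circ\phi$ and a fixed deterministic head $h$, equality of the (conditional) law of $\phi$ across environments implies equality of the (conditional) law of $f=h(\phi)$ across environments, because pushforwards preserve equality of distributions. Hence it suffices to enforce the Proposition \ref{prop:class} equalities at the level of $\Phi=\{\phi(x)\}$, which is precisely what the regularizer acts on; this is also what makes the penalty agnostic to the choice of $h$.

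Next I would handle the ``each-versus-rest'' bookkeeping built into $\MMD(\Phi_{k,u},\Phi_{-k,u})$. Writing $P_k$ for the law of $\phi$ under $e_k$ and $P_{-k}$ for the sample-size-weighted mixture of the remaining environments, I would show that $\{P_k=P_{-k}\}_{k=1}^K$ is equivalent to full pairwise equality $\{P_k=P_{k'}\}_{k,k'}$: subtracting the mixture identities with positive weights is a short linear argument forcing all $P_k$ to coincide. Because $\MMD\ge 0$, the sum $\sum_k\MMD(\Phi_{k,u},\Phi_{-k,u})$ vanishes iff each term does, which by the characteristic-kernel property is exactly the marginal condition of Proposition \ref{prop:class}(1); stratifying the identical argument by the value of $y$ gives the conditional condition of Proposition \ref{prop:class}(2), yielding the two branches of Eq. \eqref{eq:reg_mmd}. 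I would also note \emph{tightness}: the ``need not satisfy'' half of Proposition \ref{prop:class} says a causal model need not obey the conditional equality and an anti-causal model need not obey the marginal one, which justifies using marginal (not conditional) MMD for causal users and conditional (not marginal) MMD for anti-causal users --- so the regularizer encodes exactly the necessary constraints without over-restricting $F$.

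The main obstacle is the passage between the empirical MMD computed on the finite sets $\Phi_{k,u}$ and the population statements of Proposition \ref{prop:class}: the equivalence ``$\MMD=0\iff$ equal distributions'' is a population fact for characteristic kernels, whereas the displayed $R$ is a plug-in estimator. I would resolve this either by phrasing the conclusion in the population limit and appealing to consistency of the MMD estimator, or by reading the result as ``encouraging'' rather than exactly enforcing the equalities, which matches the corollary's wording. A secondary point is to carry through the confounding hypothesis inherited from Proposition \ref{prop:class}: the marginal-versus-conditional distinction is meaningful only when $y$ and $\env$ are confounded, and it is precisely this assumption that makes conditioning on $y$ both necessary and sufficient in the anti-causal branch.
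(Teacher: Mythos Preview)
Your argument is correct and, in fact, considerably more detailed than anything the paper provides: the paper does not give a separate proof of the corollary at all, treating it as an immediate practical prescription that follows from Proposition~\ref{prop:class} together with the standard fact that MMD (with a characteristic kernel) tests distributional equality. Your reduction from $f$ to $\phi$, the each-versus-rest equivalence, and the population-versus-empirical caveat are all sound elaborations that the paper leaves implicit; so your proposal is a strict superset of the paper's (nonexistent) proof, and there is nothing to correct.
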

When learning over multiple users (Eq. \eqref{eq:learning_objective}),
the operational conclusion is that all users of the same class---regardless of their specific graphs---should be regularized in the same manner, as in
Eq. \eqref{eq:reg_mmd}.
\subsection{User graph subclasses: inter-feature relations} \label{sec:subclasses}

Consider now two users that are of the same class (i.e., causal or anti-causal),
but perceive differently the causal relations between $x$ and $r$:
a \emph{believer}, $\ubel$, who believes recommendations follow from the item's attributes;
and a \emph{skeptic}, $\uskep$, who presumes that the system reveals item attributes to match a desired recommendation
(see Fig.\ref{graph-causal-2}).
Our main result shows that even if both users share the same objective preferences and hence exhibit the same choice patterns---to be \emph{optimally} invariant, each user may require her own, independently-trained model (though with the same regularization).
\begin{proposition}
\label{prop:subclass}
Let $\ubel,\uskep$ be two users of the same class (i.e., causal or anti-causal)
but of a different subclass (i.e., believer and skeptic, respectively).
Even if there is a single predictor $f$ which is optimal for the pooled distribution $\disttrn$,
each user can have a different optimal CI predictor.
\end{proposition}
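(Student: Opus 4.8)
The plan is to exhibit a concrete pair of structural causal models---one for the believer $\ubel$ and one for the skeptic $\uskep$---that induce the same joint distribution $\disttrn$ over $(u,x,r,y)$ (hence the same in-distribution optimal predictor), but whose causal graphs differ in the orientation of the $x$--$r$ edge, and then show that the counterfactual-invariance constraint forces the two users' predictors to discard different coordinates of $(x,r)$. The key conceptual point to make precise is that CI is a property of the \emph{graph}, not of the observed distribution: two graphs that are Markov-equivalent on the observables can still differ in which interventional/counterfactual independencies they entail once the latent environment $\env$ is added, so the set of CI predictors differs even though the set of risk-optimal predictors does not.

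First I would set up the minimal example. Take $x=(x_1,x_2)\in\R^2$ and $r\in\R$, and let $\env$ act on the ``upstream'' feature. For the believer, the generative order is $\env \to r \to x$ (plus $x,r \to y$ for a causal user, or $y \to x,r$ for an anti-causal user---the argument is symmetric, so I would fix causal for concreteness and invoke Proposition~\ref{prop:class} for the regularization). Concretely: $r = \env + \eta_r$, $x_1 = r + \eta_1$, $x_2 = \eta_2$, and $y = \one{\,\alpha x_2 + \beta(x_1 - r) > 0\,}$, so that $y$ depends only on the $\env$-independent residuals. For the skeptic, reverse the $x$--$r$ edge: $x_1 = \env + \eta_1'$, $r = x_1 + \eta_r'$, $x_2=\eta_2'$, with $y$ the same deterministic rule. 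The hard part will be choosing the noise distributions so that the two SCMs produce \emph{identical} observational joints $\dist^e(x,r,y)$ for every $e$ (this is where I would work a little: it amounts to matching a Gaussian-type covariance structure between $(x_1,r)$, which is solvable because the $x_1$--$r$ edge reversal is Markov-equivalent, and then checking the deterministic $y$-rule is preserved because $x_1-r$ has the same law either way). Once that is done, any $f$ that is Bayes-optimal for $\disttrn$ is optimal for both users, giving the ``single optimal ID predictor'' hypothesis of the proposition.

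Next I would compute the counterfactual-invariant predictors for each graph. Using the counterfactual $x(e),r(e)$ obtained by intervening on $\env$: for the believer, $r(e)=e+\eta_r$ and $x_1(e)=e+\eta_r+\eta_1$ both move with $e$, while $x_1(e)-r(e)=\eta_1$ and $x_2(e)=\eta_2$ are invariant; hence a CI predictor for $\ubel$ may depend on $x_1-r$ and $x_2$ but not on $r$ (or $x_1$) alone. For the skeptic, $x_1(e)=e+\eta_1'$ and $r(e)=e+\eta_1'+\eta_r'$ move with $e$, but now $r(e)-x_1(e)=\eta_r'$ is invariant, so a CI predictor for $\uskep$ may depend on $r-x_1$ and $x_2$. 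Since the optimal decision rule needs the direction $\beta(x_1-r)+\alpha x_2$, both users can in fact realize it---so to force the optimal CI predictors to differ I would instead make $y$ depend on a feature that is a residual in one graph but $\env$-entangled in the other: e.g. let $y=\one{\alpha x_2 + \gamma\, r > 0}$ and arrange that in the believer's graph $r$ is a legitimate causal input whose variation is partly $\env$-free in a way the skeptic's graph does not share. The cleanest route, and the one I would ultimately commit to, is to show that the \emph{set of functions consistent with CI} is $\{f : f \text{ measurable w.r.t. } \sigma(x_1-r,\,x_2)\}$ for the believer and $\sigma(r-x_1,\,x_2)$ for the skeptic; these $\sigma$-algebras are genuinely different (neither contains the other, since $x_1-r$ and $r-x_1$ determine each other but the \emph{joint} with the respective $\env$-moving coordinate differs), so the constrained risk minimizers---the projections of the unconstrained Bayes rule onto these two different feasible sets---are different functions of $(x,r)$.

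I expect the main obstacle to be the distribution-matching step: verifying rigorously that the believer and skeptic SCMs are observationally indistinguishable \emph{at every environment $e$ simultaneously}, not just at one, while still differing in the counterfactual maps $e\mapsto(x(e),r(e))$. I would handle this by keeping everything jointly Gaussian conditional on $\env$ (so the match reduces to equating a $2\times 2$ covariance and a mean vector, which pins down the free noise variances), and by noting that the counterfactual maps are read off the structural equations directly---so they can differ even when all interventional distributions $\dist^e$ coincide, which is exactly the phenomenon the proposition is asserting. A secondary subtlety is that ``optimal CI predictor'' should be interpreted as the minimizer of $\loss$ over the CI-feasible class; I would state this assumption explicitly (it matches the regularized objective in Eq.~\eqref{eq:learning_objective} in the $\lambda\to\infty$ limit) and then the conclusion is immediate once the two feasible classes are shown to be distinct and to contain distinct risk minimizers.
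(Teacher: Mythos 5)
Your construction does not establish the proposition, and the step you ``ultimately commit to'' is false: $\sigma(x_1-r,\,x_2)$ and $\sigma(r-x_1,\,x_2)$ are the \emph{same} $\sigma$-algebra (the two generating variables are measurable bijections of one another), so in your chain example the CI-feasible classes for the two users coincide exactly. Indeed, in both of your SCMs $r(\env)$ and $x_1(\env)$ move with $\env$ and only the residual $x_1-r$ and $x_2$ are invariant, so once you also match the observational joints the constrained risk minimizers are identical and no counterexample comes out. You noticed the symptom yourself (``both users can in fact realize it''), but the proposed fix---making $y$ depend on $\alpha x_2+\gamma r$ and ``arranging'' that $r$ has $\env$-free variation for one user only---cannot work inside your setup, because in both orientations $r$ is a descendant of $\env$; reversing an edge strictly downstream of $\env$ in a Markov-equivalent chain never changes which functions of the observables are counterfactually invariant. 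Moreover, insisting on matching the joints \emph{in every environment} (your stated main obstacle) is not just harder than needed, it is precisely the regime in which the conclusion fails for your example; the proposition only assumes a common optimum for the pooled $\disttrn$.

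What is missing is an asymmetry under which reversing the $x$--$r$ edge changes a conditional-independence relation involving $\env$. The paper gets this with anti-causal users and a third variable: $y\rightarrow x_{sp},r,x_{ac}$, $\env\rightarrow x_{sp}$, a confounder $c$ between $\env$ and $y$, and the contested edge between $r$ and $x_{sp}$. In one orientation $x_{sp}$ is a collider on the path from $c$ (through $\env$) to $r$, so $c\indep (r,x_{ac})\mid y$; a short lemma then shows $D(y\mid r,x_{ac})$ transfers to every causally compatible distribution, making it the optimal CI predictor for that subclass. For the other orientation the paper constructs a distribution with the \emph{same pooled} joint over $(x_{sp},x_{ac},r,y)$ (matching factors, a linear equation whose solution set has positive Lebesgue measure) and invokes faithfulness (unfaithful parameters have measure zero) to obtain $(r,x_{ac})\not\indep \env\mid y$, so that very predictor is not even CI there---hence the optimal CI predictors differ. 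Note also that ``optimal CI'' is defined there as worst-case over causally compatible distributions, not as a $\lambda\to\infty$ limit of the regularized objective, and that your believer/skeptic naming is flipped relative to the paper's definitions; but these are side issues next to the structural one above.
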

Proof is in Appendix \ref{sec:app_theory}.
Prop. \ref{prop:subclass} can be interpreted as follows:
Take some $u$, and `counterfactually' invert the edges between $x$ and $r$.
In some cases, this 
will have no effect on $u$'s behavior under 
$\envstrn$,
and so any $f$ that is optimal in one case will also be optimal in the other.
Nonetheless, for optimality to carry over to \emph{other} environments---different predictors may be needed.
This is since
each causal structure implies a different interventional distribution, 
and hence a different set of CI predictors:
e.g., in $G_{x \textcolor{brown}{\leftarrow} r}$, the v-structure $e {\rightarrow} x \textcolor{brown}{\leftarrow} r$ suggests that an invariant predictor may depend on $r$, yet in $G_{x \textcolor{teal}{\rightarrow} r}$ it cannot. 
If the sets of CI predictors do not intersect, then necessarily there is no single optimal model. 

The practical take-away is that even if two different users exhibit similar observed behavioral patterns (e.g., differences in their graphs are not expressed in 
the data), whether they are skeptics or believers has implications
for robust learning;
Prop. \ref{prop:subclass} considers an extreme case.
Luckily, for data with mixed subclasses,
having multiple training environments enables us to nonetheless learn invariant predictors, e.g., 
by partitioning the data by user subclass---whether inferred or assumed---and learning a different model for each subclass
(with regularization determined by class).


\textbf{Graph knowledge: inference vs. beliefs.}
Formally, both Prop. \ref{prop:class} and Prop. \ref{prop:subclass} require that we precisely know each user's
class and subclass, respectively,
which amounts to inferring the directionality of a subset of edges.
In principle, this can be done via experimentation
(e.g., using focused interventions such as A/B tests)
or from observational data (using simple conditional independence tests, e.g., \cite{spirtes1991algorithm, glymour2019review}) \cite{shalizi2013advanced}.
While this is certainly easier than inferring the entire graph,
orienting edges can still be challenging or expensive.
Nonetheless, Prop. \ref{prop:class} can still be 
practical useful when there is 
good \emph{domain knowledge} regarding user classes,
at either the individual or population level:
if the learner has certain prior beliefs regarding users' causal perceptions, Eq. \eqref{eq:reg_mmd}
provides guidance as to
how to devise the learning objective:
what regularization to apply (Prop. \ref{prop:class}),
and how to partition the data (Prop. \ref{prop:subclass}).
Our experiments in Sec. \ref{sec:exp},
which we present next, are designed under this perspective.

\if
\green{\rule{\textwidth}{3pt}}


We focus on settings where we can learn predictive models that take user beliefs into account. Building on recent advances in invariant learning \cite{arjovsky2019invariant,wald2021calibration}, we cast the problem of a user's causal beliefs affecting her choices (\S \ref{sec:setting}), as an OOD generalization problem. The focus on robustness to changes in $z$ allows us to pinpoint the failures that are caused by not directly modelling user behavior, and to clearly demonstrate the benefit from learning user-aware predictors. Our objective is to learn a predictor $f(x(z),r)$ for user $u=u^*$ that is independent of $z$, meaning that $f(x(z'),r) = f(x(z),r) \text{, } \forall z \in Z$. The reason for training such a predictor is that when the distribution of $p(z,y)$ changes, it will still be able to make correct predictions. To learn such a model, it was recently argued that knowing the true causal model generating the data is beneficial \cite{veitch2021counterfactual}. 


As we will show here, in our use-case this entails that if the system learns a user-type-specific predictor, where the training objective for each predictor takes into account the corresponding causal graph, then it can be more stable across distribution shifts. There are two important learning implications for a given graph: (1) the regularization scheme should match the causal structure between the input features and the label, where a causal $x,r \textcolor{blue}{\rightarrow} y$ and anti-causal $x,r \textcolor{red}{\leftarrow} y$ links will induce a marginal and conditional (on $y$) distributional penalty, respectively; (2) causal relationships between input variables (i.e. $x \textcolor{blue}{\rightarrow} r, x \textcolor{red}{\leftarrow} r$) might entail a different relationship between $f(x,r)$ and $z$, such that to be CI to $z$ we would get that $f_{x \textcolor{blue}{\rightarrow} r}(x,r) \neq f_{x \textcolor{red}{\leftarrow} r}(x,r)$, regardless of their causal relationship with the label.



\subsection{Invariant Learning with User Graphs} 


Following the example analyzed in \S \ref{sec:setting}, we first describe a case where users differ by their belief regarding the causal direction between an item's observed attributes $x$ and their actions $y$. Specifically, we divide users into two cases: \textit{causal} and \textit{anti-causal}, where \textit{causal} indicates the belief that item features cause user satisfaction \cite{scholkopf2012causal}. 

Dividing users into \textit{causal} and \textit{anti-causal} can be motivated from several perspectives. First, following the example in \S \ref{sec:setting}, it is intuitive to understand how these two types of users might make different decisions based solely on their beliefs \cite{spiegler2020behavioral}. Second, in the domain generalization literature, the distinction between \textit{causal} and \textit{anti-causal} learning problems gives rise to learning objectives that can take each type into account through a distinct regularization scheme \cite{veitch2021counterfactual}. Finally, using the graphical models literature we can distinguish between two different graphs using conditional independence tests only if they are not equivalent \cite{shalizi2013advanced}, a property that holds between the \textit{causal} and \textit{anti-causal} cases.

In in-distribution settings,
relying on spurious features, such as a tendency towards some items, might not hurt predictive performance. However, when such a tendency changes (i.e. when tastes change) predictive models can make arbitrarily large mistakes if they rely on such features \cite{arjovsky2019invariant}. Specifically, if new items are being introduced into the system with different $x$'s and corresponding $r$'s, the relationship between $z$ (i.e. the time the item was introduced) and user actions $y$ might induce a spurious correlation that will hurt performance. This type of relationship between $z,x,r,y$ is encoded in Figure \ref{graph-causal}(a).

\begin{figure}[ht]
    \centering
    \caption{Simplified graphs describing different user-types in our analysis. Users are either \textcolor{blue}{\textit{causal}} or \textcolor{red}{\textit{anti-causal}} w.r.t: (a) link between their satisfaction $y$ and observed product attributes $x$. (b) link between the system's recommendations $r$ and $x$.
    \todo{add (a) and (b) labels to graphics}
    \todo{switch from $z$ to $\env$}
    }
    \label{graph-causal}
    \begin{subfigure}{0.4\textwidth}
        \begin{tikzpicture}[node distance = 1.5cm]
        \node[circle, draw, text centered] (x) {$x,r$};
        \node[circle, draw, right of = x, text centered] (y) {$y$};
        \node[circle, draw, left of = x, text centered] (z) {$z$};

        \draw[blue, ->, line width = 1,transform canvas={yshift=1.5mm}] (x) -- (y);
        \draw[red, ->, line width = 1,transform canvas={yshift=-1.5mm}] (y) -- (x);
        \draw[->, line width = 1] (z) -- (x);
        \draw[<->, line width = 1, dash dot] (z) to[out=-90, in=-90, distance=0.5cm] (y);
        \end{tikzpicture}
    \end{subfigure}
    \begin{subfigure}{0.4\textwidth}
    \begin{tikzpicture}[node distance = 1.5cm]
        \node[circle, draw, text centered] (R) {$r$};
        \node[circle, draw, below of = R] (X) {$x$};
        \node[ circle, draw, right of = R, text centered] (y) {$y$};
        \node[circle, draw, left of = R, text centered] (Z) {$z$};

        \draw[->, line width = 1] (X) -- (y);
        \draw[blue, ->, line width = 1,transform canvas={xshift=1.5mm}] (X) -- (R);
        \draw[red, ->, line width = 1,transform canvas={xshift=-1.5mm}] (R) -- (X);
        \draw[->, line width = 1] (Z) -- (X);
        \draw[->, line width = 1] (R) -- (y);
        \draw[<->, line width = 1, dash dot] (Z) to[out=-90, in=-90, distance=2.1cm] (y);
    \end{tikzpicture}
    \end{subfigure}

\end{figure}



Alternatively, even if users share the beliefs regarding the causal link between $x,r$ and $y$, if there are differences in the perception of the link between $x$ and $r$, optimality might still require us to learn a user-specific model. Concretely, we can also consider such a case, depicted in Figure \ref{graph-causal}(b). In this case, all users imagine that $x,r \rightarrow y$, but differ w.r.t the imagined link between recommendations and item attributes. Here, a \textit{causal} user is one who thinks that recommendations are caused by the item's attributes, and an \textit{anti-causal} user believes that the observed attributes are chosen by the system according to its desired recommendation.



\subsection{Learning Implications of User Graphs}

As illustrated in Figures \ref{graph-causal}(a) and \ref{graph-causal}(b), for both the \textit{causal} and \textit{anti-causal} users, the relationship between $z$ and user actions $y$ can be reflected in the product attributes. To disentangle the effect of $z$ on the system's predictions, it would require a classifier $f(x,r)$ that satisfies the appropriate conditional independencies. Following \citet{veitch2021counterfactual}, we can find these independencies by examining the graph that describes the data generating process of our data. Depending on the graph, we arrive at different independencies which in turn induce different training procedures.



Our use of the causal graph for learning robust predictors is therefore rather limited. In fact, the only property we need to get from the graph is the conditional independence that should hold between our classifier $f(x,r)$ and $z$ such that we are independent of $z$. Then, to enforce such independence during model training, we apply a distributional penalty that enforces the desired property \cite{greenfeld2020robust, veitch2021counterfactual}. In our experiments (\S \ref{sec:exp}), we use the \emph{MMD} \cite{gretton2012kernel} and \emph{CORAL} \cite{sun2016deep} penalties to enforce independence between our predictor $f(\cdot)$ and $z$.
In the \textit{causal} and \textit{anti-causal} (w.r.t the label $y$) cases, this entails the following loss function: 

\begin{equation}
    \mathcal{L} = l_{CE} + \lambda \cdot l_{dist}
\end{equation}
\vspace{-7mm}

\vspace{-7mm}
\begin{align*}
    l_{dist} = \begin{cases}
    \text{MMD}(P(f(x,r),z),(f(x,r),z')) & \textbf{ (c)} \\
    \sum\limits_{y \in Y} \text{MMD}(P(f(x,r),z,y),(f(x,r),z',y)) & \textbf{ (a-c)} 
    \end{cases}
\end{align*}
For a detailed description of $\text{MMD}(f(x,r),z)$ and $\text{CORAL}(f(x,r),z)$, see Appendix \ref{sec:app_learning}. 


In Figure \ref{graph-causal-2}, we consider two causal models $G_{x \textcolor{blue}{\rightarrow} r}$ and $G_{x \textcolor{red}{\leftarrow} r}$, with edges $x \textcolor{blue}{\rightarrow} r$ and $x \textcolor{red}{\leftarrow} r$ respectively, denoting different perceptions users may have of the link between $x$ and $r$. Interestingly, while the risk minimizing counterfactually-invariant classifiers change according to the \textit{causal} user model, we may learn them using the same regularization scheme (marginal, or \textit{causal}). We state this as follows.

\begin{proposition}
Let $P_{x \textcolor{blue}{\rightarrow} r}, P_{x \textcolor{red}{\leftarrow} r}$ be distributions entailed by $G_{x \textcolor{blue}{\rightarrow} r}, G_{x \textcolor{red}{\leftarrow} r}$ accordingly. Then if $y$ and $z$ are not subject to selection (but may be confounded) then for both models, any invariant predictor must satisfy $f(x, r) \indep z$.
\end{proposition}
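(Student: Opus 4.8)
The plan is to read the required independence directly off the structural equations of the two graphs, and then note that it is exactly the ``causal'' signature of \citet{veitch2021counterfactual}. Both $G_{x\to r}$ (the believer's graph) and $G_{x\leftarrow r}$ (the skeptic's graph) share the same label-side structure --- the edges $x\to y$ and $r\to y$ --- so $y$ is a descendant, and not an ancestor, of the features; the only confounding edge is the bidirected $z \leftrightarrow y$, and by assumption there is no selection on $y$. Writing the structural equations, in $G_{x\to r}$ we have $x = h_x(z,u_x)$ and $r = h_r(x,u_r)$, while in $G_{x\leftarrow r}$ we have $r = h_r(u_r)$ and $x = h_x(z,r,u_x)$. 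In either graph the exogenous terms $u_x,u_r$ are independent of $z$: the graph's only bidirected edge is $z\leftrightarrow y$ and $x$ is a \emph{child} of $z$, so $z$ shares no latent parent with $x$ or $r$, hence $(u_x,u_r)\indep z$.

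Next I would unfold counterfactual invariance. For a CI predictor $f$ we have $f(x(e'),r(e')) = f(x(e),r(e))$ a.e., where $x(e),r(e)$ are obtained by substituting $z\leftarrow e$ into the structural equations while holding $u_x,u_r$ fixed. Substituting the equations above, the value $f(x(e),r(e))$, regarded as a function of $(e,u_x,u_r)$, does not depend on its first argument; hence there is a measurable $g$ with $f(x,r)=f(x(z),r(z))=g(u_x,u_r)$ almost surely. Since $(u_x,u_r)\indep z$, it follows that $f(x,r)\indep z$. The argument is word for word identical in the two graphs: flipping the $x$--$r$ edge only moves $r$ into the arguments of $h_x$ (or $x$ into the arguments of $h_r$), and neither move reintroduces $z$, so the conclusion is unchanged --- which is precisely the content of the proposition.

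Finally, I would remark that this recovers the marginal-independence ``causal case'' of \citet{veitch2021counterfactual} (cf. part~(1) of Prop.~\ref{prop:class}): because $y$ is downstream of the features and is not selected on, invariance forces the unconditional statement $f(x,r)\indep z$ rather than the weaker conditional statement $f(x,r)\indep z\mid y$ that selection would have produced. I expect the main obstacle to be technical bookkeeping rather than the idea: (i) passing from ``$f(x,r)=g(u_x,u_r)$ a.e.'' to a statement about the observational distribution needs the mild regularity already invoked in Prop.~\ref{prop:class} (overlap across environments, measurability of $f$, and a measurable choice of $g$ when $\envs$ is uncountable); (ii) in $G_{x\leftarrow r}$ one must check that the collider $z\to x\leftarrow r$ does not break the argument --- it does not, since we assert a \emph{marginal} independence and hence never condition on $x$ to open it; and (iii) the argument uses that the $(y,z)$-confounder is not also a parent of $x$ or $r$ (read off the figure), since otherwise $u_x$ could be correlated with $z$ and the conclusion could fail.
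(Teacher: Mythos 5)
Your argument is correct, and it reaches the conclusion by a genuinely different route than the paper. The paper treats this as an instance of its causal-class result (part (1) of Prop.~\ref{prop:class}): it assumes the features admit a decomposition into components $(x,r)_{y\wedge e},(x,r)^{\perp}_{y},(x,r)^{\perp}_{e}$, invokes Lemma~3.1 of \citet{veitch2021counterfactual} to characterize CI predictors as $(x,r)^{\perp}_{e}$-measurable functions, and then reads the observational signature $f(x,r)\indep z$ (confounding without selection, label downstream of the features) off the expanded graph via their Theorem~3.2 --- the direction of the $x$--$r$ edge never enters, which is exactly why both $\Gbel$ and $\Gskep$ get the same marginal constraint. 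You instead give a self-contained structural-equation proof: unfold the counterfactuals $x(e),r(e)$ in each graph, use counterfactual invariance to show $f(x(z),r(z))=g(u_x,u_r)$ a.s.\ for exogenous noise satisfying $(u_x,u_r)\indep z$, and conclude marginal independence by consistency. What your route buys is transparency and weaker machinery --- no assumed feature decomposition, and the exact hypotheses (noise independence, the confounder of $(y,z)$ not feeding $x$ or $r$, no selection, and countability/discreteness of $\envs$ so the null sets can be unioned) are laid bare, with your caveats (i)--(iii) correctly flagging the only bookkeeping points; what the paper's route buys is uniformity, since the same abstract argument also yields the conditional signature for anti-causal or selection-biased users without rewriting structural equations case by case. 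One small point worth making explicit if you polish this: the passage from ``$f(x(e),r(e))$ is constant in $e$ a.e.\ for each pair $e,e'$'' to ``$f(x,r)=g(u_x,u_r)$ a.s.'' is where discreteness of $\env$ (assumed in the paper's appendix) is used, so state it as a hypothesis rather than a remark.
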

However, the crucial point here is that even though the same learning algorithm should be applied for users of both types, learning a separate model for each user type is still beneficial. That is because in general $P_{x \textcolor{blue}{\rightarrow} r}(y \mid x, r) \neq P_{x \textcolor{red}{\leftarrow} r}(y \mid x, r)$.
There are two benefits for separating a given dataset into multiple user-types: (1) learning a personalized model is better in loss terms for each user on its own; (2) even if we look at average test loss and the mixture between user types stays the same as in training, the robust model should be better if we split according to user type. We elaborate and illustrate this with a numerical example Appendix \ref{sec:app_synthetic}.
Intuitively, the different causal structures imply different interventional distributions under the two models, and hence a different set of CI predictors. For instance, in $G_{x \textcolor{red}{\leftarrow} r}$ the $V$ structure $z \rightarrow x \textcolor{red}{\leftarrow} r$ suggests that an invariant predictor may depend on $r$, yet in $G_{x \textcolor{blue}{\rightarrow} r}$ it cannot. 

\textbf{Identifying User-Types.}
While in certain cases a system might be able to know its users' beliefs, in general this is probably too strong of an assumption to make. A perhaps more realistic scenario is where there are different user-types, each having different beliefs, but we don't know \textit{ex-ante} and therefore can't assign each user to a specific causal model. In such a case, the system might have to discover the true causal graph or the desired conditional independence, and recommend given the predicted graph. Discovering user-types can generally be done through focused interventions such as A/B testing on recommendations, or through conditional independence tests on the observational data \cite{spirtes1991algorithm, glymour2019review}. 

\fi
\section{Experiments and Results}
\label{sec:exp}

We present three experiments: two targeting user classes (\textcolor{blue}{\textit{causal}} or \textcolor{red}{\textit{anti-causal}}) and using real data, and one targeting user subclasses (\textcolor{teal}{\textit{believers}} and \textcolor{brown}{\textit{skeptics}}) and using synthetic data.
Appendix \ref{sec:app_data} includes further details on model architectures, training procedures, and data generation.
\vspace{-3mm}

\subsection{Learning with \textit{causal} users: text-based beer recommendation} \label{sec:recbeer}


\textbf{Data.} We use \textit{RateBeer}, a dataset of beer reviews with over 3M entries and spanning $\sim 10$ years \cite{mcauley2012learning}.
We use the data to generate beer features $x$ (e.g., popularity, average rating) and $r$ (e.g., textual review embeddings) and user features $u$ (e.g., average rating, word counts). Given a sample $(u,x,r)$, our goal is to predict a (binarized) rating $y$. Here we focus on \emph{causal} users, and so would like labels $y$ to expresses causal user beliefs. The challenge is that our observational data is not necessarily such. To simulate causal user behavior, we rely on the observation that $x,r\textcolor{blue}{\rightarrow}y$ means ``changes in $x,r$ affect $y$'', and for each $u$ create an individualized empirical distribution of `counterfactual' samples $(x',r',y')$ that approximate the entire intervention space (i.e., all counterfactual outcomes $y'$ under possible interventions $(x,r)\mapsto(x',r')$). Training data 
is then generated by sampling from this space.

We consider each year as an environment $\env$, with each $\env$ inducing a distribution over $(u,x,r)$. We implement spuriousness via selection:
Each $\env$ entails different fashionable `tastes' in beer, expressed as a different weighting over the possible beer types (e.g., lager, ale, porter). Labels are then made to correlate with tastes in a certain temporal pattern. This serves as a mechanism for spurious correlation.

\begin{figure}[!t]
\centering
\begin{minipage}[t]{0.45\textwidth}
\includegraphics[width=\textwidth,valign=t]{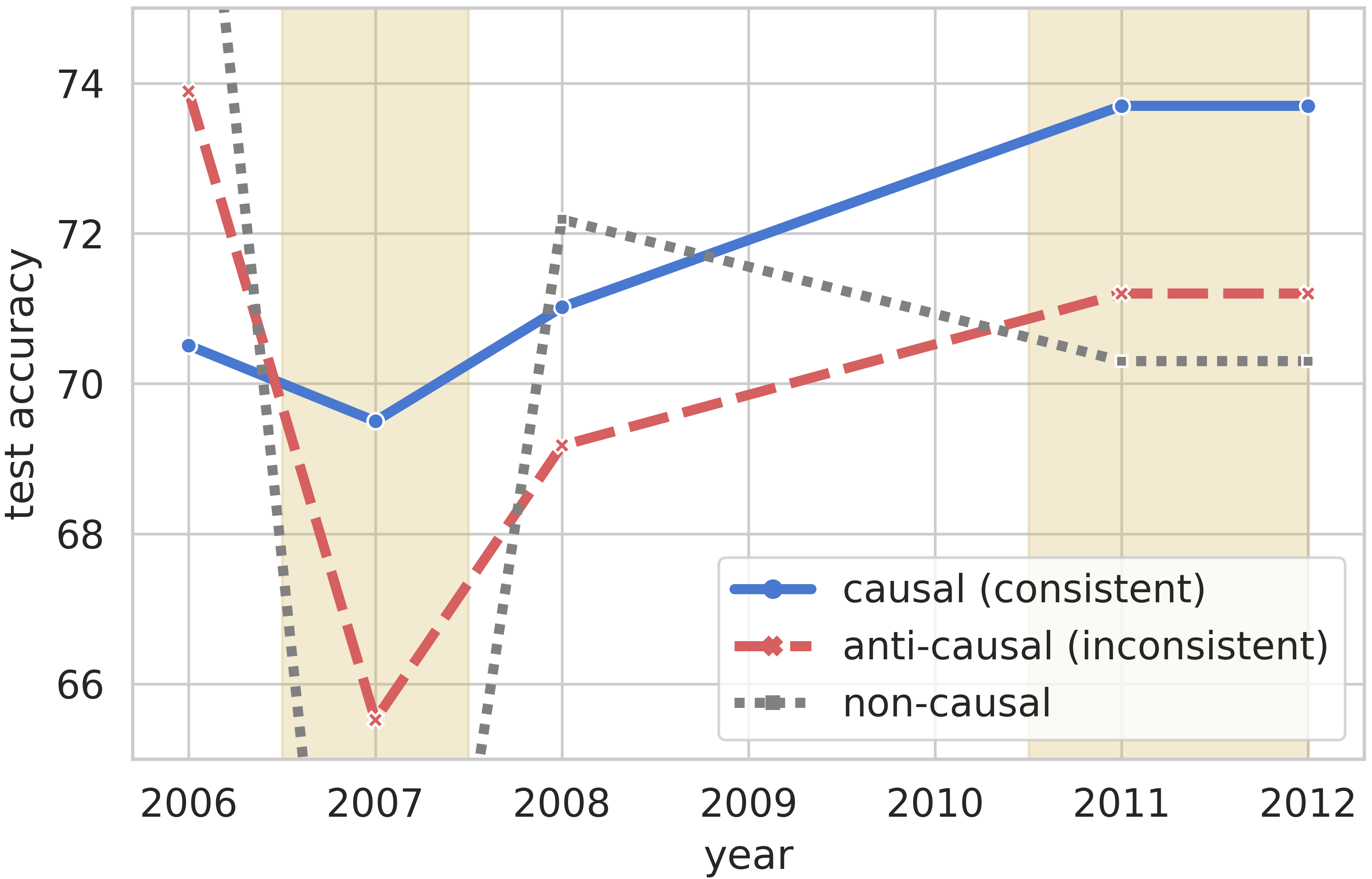}
\caption{\textbf{\textit{RecBeer} Results}. For each year, models are trained on past data (starting $2002$), and predict on the following year.
The \textcolor{blue}{\textit{causal}} training scheme, consistent with the user class, outperforms other methods when beer-type fashions ($\env$) changes.
Periods with substantial change are highlighted in \textcolor{brown}{tan}.}
\label{fig:res_beer}\end{minipage}\qquad
\begin{minipage}[t]{0.45\textwidth}
\includegraphics[width=\textwidth,valign=t]{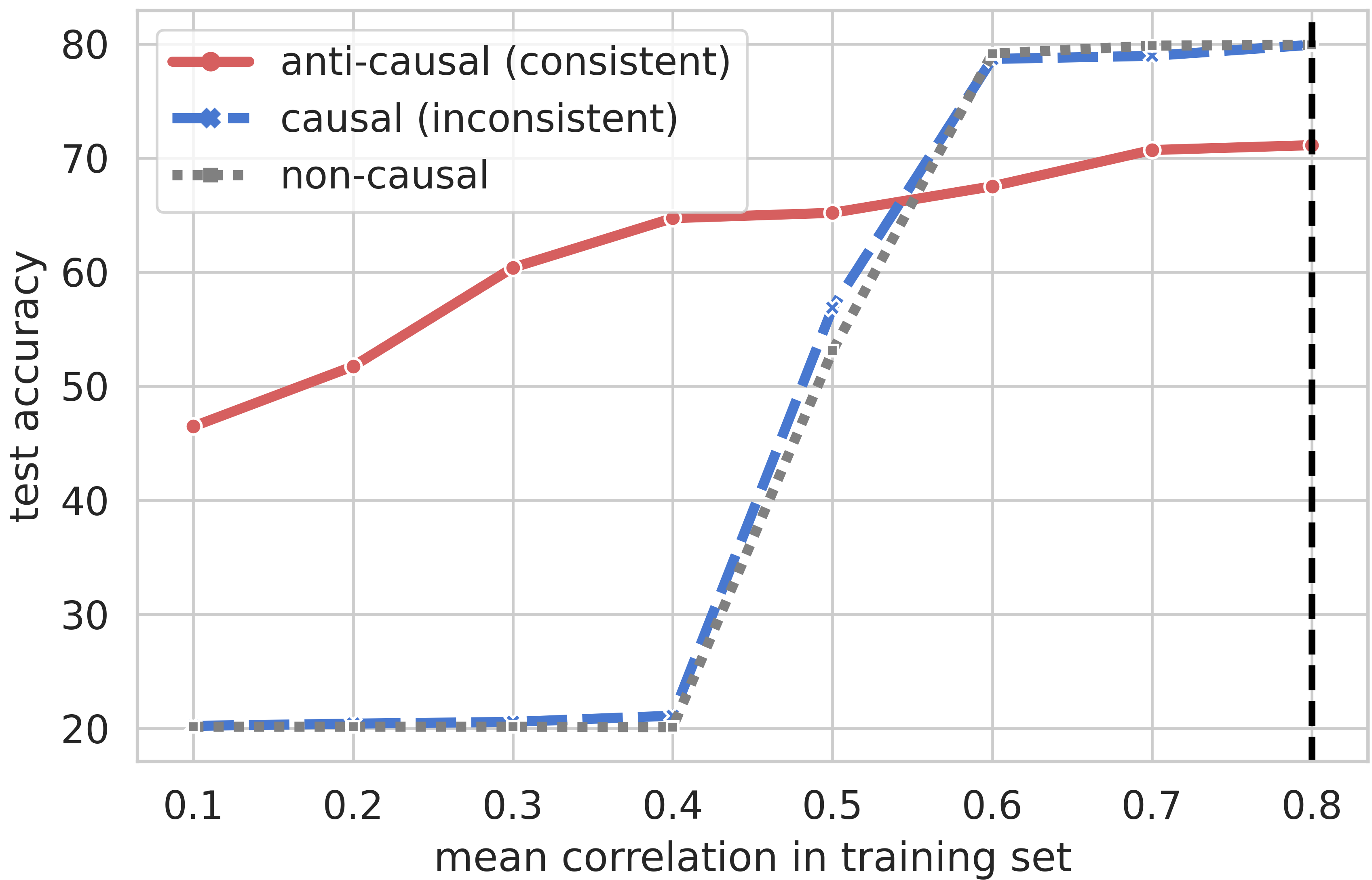}
\caption{\textbf{\textit{RecFashion} Results}.
Environments vary in the correlation between 
item colors and user choices.
The \textcolor{red}{\textit{anti-causal}} regularization scheme,
consistent with the user class, outperforms methods 
when test-time deviates from train-time correlation (=0.8).
When correlations flip ($<0.5$), other methods crash.}
\label{fig:res_fashion}\end{minipage}
\vspace{-3mm}
\end{figure}

\textbf{Results.}
Fig. \ref{fig:res_beer} compares the performance over time of three training procedures that differ only in the type of regularization applied:
\textcolor{blue}{\textit{causal}}, \textcolor{red}{\textit{anti-causal}}, and \textit{non-causal}.
Our data includes behavior generated by causal-class users;
results demonstrate the clear benefit of using a behaviorally-consistent 
regularization scheme (here, causal).
Note the causal approach is not optimal in 2006 and 2008;
this is since correlations in $\env{\leftrightarrow}y$ are set to make these years similar to the training data. However, in the face of significant shifts in taste,
other approaches collapse, while the causal approach remains stable.

\subsection{Learning with \textit{anti-causal} users: clothing-style recommendation} \label{sec:recfashion}

\textbf{Data.} We use the \textit{fashion product images} dataset\footnote{\url{https://www.kaggle.com/paramaggarwal/fashion-product-images-dataset}}, which includes includes $44.4k$ fashion items described by images, attributes, and text.
Here we focus on \emph{anti-causal} users, and generate data in a way similar to \S \ref{sec:recbeer}, but using an anti-causal intervention space.
In this experiment we let user choices $y \in \{0,1\}$ depend
on an item's image and color, which can be either red or green; in this way, $x$ is the item's grayscale image, and $r$ its hue (which we control).
Here we consider environments $\env$ that induce varying degrees of spurious correlations between color and user choices, $P(y=1|\mathtt{red})=P(y=0|\mathtt{green})=p_e$.
For the test set we use $p_e=0.8$, and experiment with training data
that gradually deviate from this relation, i.e., having $p_{e'} \in [0.1,0.8]$.


\textbf{Results.}
Fig. \ref{fig:res_fashion} shows that consistent regularization (here, anti-causal)
outperforms other alternatives whenever correlations deviate from those observed in training. Once correlations flip ($<0.5$),
both \emph{causal} and \emph{non-causal} approaches fail catastrophically;
the \emph{anti-causal} approach remains robust.

  \begin{minipage}{\textwidth}
  \begin{minipage}[t]{0.35\textwidth}
    \centering
    \begin{tikzpicture}[node distance = 1.4cm]
        \node[circle, draw, text centered] (e) {$\env$};
        \node[circle, draw, below of = e] (r) {$r$};
        \node[circle, draw, left of = r] (x_sp) {$x_{sp}$};
        \node[circle, draw, right of = r] (x_ac) {$x_{ac}$};
        \node[circle, draw, below of = r] (y) {$y$};

        \draw[->, line width = 1] (y) -- (x_ac);
        \draw[->, line width = 1] (y) -- (x_sp);
        \draw[->, line width = 1] (y) -- (r);
        \draw[teal, ->, line width = 1,transform canvas={yshift=1.5mm}] (x_sp) -- (r);
        \draw[brown, ->, line width = 1,transform canvas={yshift=-1.5mm}] (r) -- (x_sp);
        \draw[->, line width = 1] (e) -- (x_sp);
        \draw[<->, line width = 1, dash dot] (e) to[out=180, in=180, distance=2.5cm] (y);
    \end{tikzpicture}\label{graph-causal-features-2}
    
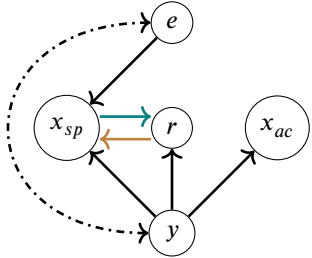
\captionof{figure}{Data-generating process for the user subclass experiment (synthetic). Here, $x$ factorizes into an \textit{anti-causal} component $x_{ac}$,
    and a spurious component $x_{sp}$ linked with $r$.
    Spuriousness results from selection bias between $y$ and $\env$.
    }
  \end{minipage}
  \hfill
  \begin{minipage}[h]{0.6\textwidth}
    \centering
    \captionof{table}{Accuracy for the user subclass experiment.
    Rows show train conditions: with and without regularization, and which users are included in the training set.
    Columns show test conditions: ID/OOD, and user type.
    Best results for each train condition (rows) are highlighted in bold.}
    \begin{tabular}{l|l|cc}  \toprule 
        \multirow{2}{*}{Reg.} & \multirow{2}{*}{Users{@}train} & \multicolumn{2}{c}{Accuracy (ID / OOD)}\\
        & & \textcolor{teal}{\textit{skeptic}} & \textcolor{brown}{\textit{believer}} \\ \midrule \midrule
        \multirow{3}{*}{$\lambda=0$} & \textcolor{teal}{\textit{skeptic}} & \textbf{78.0} / 50.0  & \textbf{89.8} / 75.1\\
        & \textcolor{brown}{\textit{believer}} & \textbf{78.0} / 50.0  & \textbf{89.8} / 75.1\\
        & both & \textbf{78.0} / 50.0  & \textbf{89.8} / 75.1\\ \midrule
        \multirow{3}{*}{$\lambda>0$} & \textcolor{teal}{\textit{skeptic}} & 71.1 / \textbf{75.6} & \gray{74.67 / 75.5} \\
        & \textcolor{brown}{\textit{believer}} & \gray{69.8 / 52.5} & 88.5 / \textbf{85.2}\\
        & both & 70.03 / 64.57 & 78.88 / 78.13  \\
        \bottomrule
    \end{tabular}\label{tab:toy_results}
    \end{minipage}
  \end{minipage}

\subsection{Learning with multiple user subclasses}
\label{subsec:synthetic}

Our final experiment studies learning with users of
of the same-class (here, \textit{anti-causal}) but different subclasses: \textcolor{teal}{\textit{skeptics}} or \textcolor{brown}{\textit{believers}}.
Our analysis in \S \ref{sec:subclasses} suggests that each user subclass
may have a different optimal predictor; here we investigate this empirically on synthetic data.





\textbf{Data.} 
The data-generating process is as follows (see Fig. \ref{graph-causal-features-2}).
We use three environments: $e_1,e_2$ at train, and $e_3$ at test,
and implement a selection mechanism (dashed line) that causes differences in $p(y|\env_i)$
across $\env_i$.
Since we focus on anti-causal users, features $x,r$ are determined by $\env,y$.
We use three binary features: $x_{sp}$ (`spurious'), $x_{ac}$ ('anti-causal'), and $r$.
These are designed so that an $f$ which uses $x_{ac}$ alone obtains $0.75$ accuracy,
but using also $x_{sp}$ improves \emph{in-distribution} (ID) accuracy slightly to $0.78$,
and so the optimal ID predictor for both user subclasses is of the form $f^*(x_{ac},x_{sp})$.
However, relying on $x_{sp}$ causes \emph{out-of-distribution} (OOD) performance to deteriorate considerably;
thus, robust models should not learn to discard $x_{sp}$.
The role of $r$ is to distinguish between user subclasses:
The \textcolor{teal}{skeptic} does not need $r$ since, for her, it is fully determined by $x_{sp}$;
the optimal invariant predictor is hence $\fbel(x_{ac})$.
Meanwhile, the \textcolor{brown}{believer}, due to the v-structure
$r \textcolor{brown}{\rightarrow} x_{sp} {\leftarrow} \env$,
can benefit in-distribution by using both $r$ and $x_{sp}$;
here, the optimal invariant predictor is $\fskep(x_{ac},r)$.

\textbf{Results.}
Table \ref{tab:toy_results} shows ID and OOD performance for each user subclass (columns), for learning with and without regularization (rows).
Since all users are anti-causal, we use anti-causal (i.e., conditional) regularization.
We compare learning a separate predictor for each user type (rows `\emph{skeptic}' and `\emph{believer}') and learning a single predictor over all users jointly (`both').
Results show that without regularization ($\lambda=0$), ID performance is good, but the learned predictor fails OOD---drastically for skeptic users (note all rows are the same since
both user types share the same ID-optimal $f^*$).
In contrast, when regularization is applied ($\lambda>0$),
learning an independent predictor for each user subclass
performs well OOD (for both subclasses), indicating robustness to changing environments;
note that ID performance is also mostly maintained.
Meanwhile, learning on the entire dataset (i.e., including both user types)
does provide some robustness---but is suboptimal both ID and OOD.

\subsection{Learning with mixed sub-populations}
Our previous experiment considered a setting in which the learner has exact information regarding each user's sub-type, and so can correctly partition the population in a way that is optimal in regards to Prop. \ref{prop:subclass}.
However, such precise knowledge may not be available in practice,
or may be too costly or difficult to infer.
In this section we experiment in a setting where the learner has only coarse information (or general beliefs) about user (sub-)types.
Our results suggests that following the practical conclusions of Prop. \ref{prop:subclass}---namely partitioning the population of users based on (estimated) types and learning a different predictive model for each---can be beneficial even when based only on a reasonable guess.

\begin{wrapfigure}{r}{0.47\textwidth}
    \centering
    \includegraphics[width=\linewidth]{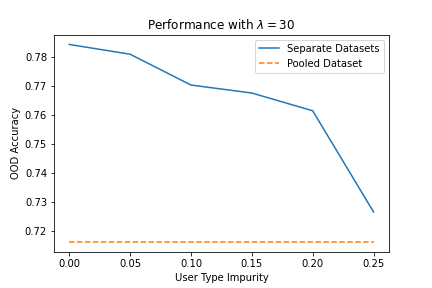}
    \caption{Learning with a mixed population of believers and skeptics. Even when the minority group is large (25\%),
    learning in a way that is tailored to the majority group is still
    beneficial.
    }
    \vspace{-3mm}
    \label{fig:believers_skeptics_stability}
\end{wrapfigure}

\textbf{Data.}
We use the setting of Sec.\ref{subsec:synthetic}
with a population of anti-causal users, composed of two sub-populations of \textcolor{teal}{skeptics} and \textcolor{brown}{believers}.
We then simulate a setting where the learner has imprecise information about user types by adding noise:
we move an $\alpha$-fraction of each subpopulation into the other,
this creating two mixed sub-populations.
with increasing levels of `impurity', $\alpha \in [0,0.25]$.
Here, $\alpha=0$ represents perfect information,
whereas $\alpha=0.25$ entails large minority groups (25\%). 


\textbf{Results.} Fig. \ref{fig:believers_skeptics_stability}
compares OOD performance of the robust model (solid line)
to a \naive\ model trained on the pooled dataset (dashed line).
In line with previous results, for $\alpha=0$, the robust model 
achieves significantly higher accuracy on the test environment.
As $\alpha$ increases, the robust model preserves its advantage, with accuracy degrading gracefully;
for $\alpha=0.25$, the robust model still outperforms the pooled baseline.
Thus, our results show that despite having imperfect information regarding user sub-types,
learning distinct models for each subpopulation,
as Prop. \ref{prop:subclass} suggests,
remains beneficial.

\vspace{-2mm}
\section{Discussion}
\label{sec:disc}
\vspace{-2mm}

Humans beings perceive the world causally; our paper argues that to cope with a world that \emph{changes},
learning must take into account how humans believe these changes take effect.
We identify one key reason: in making decisions under uncertainty,
users can \emph{cause} spurious correlations to appear in the data.
Towards this, we propose to employ tools from invariant causal learning, but in a way that is tailored to how humans make decisions, this drawing on economic models of bounded-rationality.
Our approach relies on regularization for achieving invariance, with our main point being that \emph{how} and \emph{what} to regularize can be derived from users' causal graphs.
Although we have argued that even partial graph information can be helpful---even this form of knowledge is not straightforward to obtain (notably at test-time), and may require experimentation.
Nonetheless, and in hopes of spurring further interest,
we view our work as taking one step towards establishing a disciplined
causal perspective on the interaction between recommending systems and the decision-making users they aim to serve.

\section*{Acknowledgements}
This work was supported in part by The Israel Science Foundation (grant 278/22).

\bibliography{recause}

\begin{thebibliography}{59}
\providecommand{\natexlab}[1]{#1}
\providecommand{\url}[1]{\texttt{#1}}
\expandafter\ifx\csname urlstyle\endcsname\relax
  \providecommand{\doi}[1]{doi: #1}\else
  \providecommand{\doi}{doi: \begingroup \urlstyle{rm}\Url}\fi

\bibitem[Abdollahpouri et~al.(2017)Abdollahpouri, Burke, and
  Mobasher]{abdollahpouri2017controlling}
Himan Abdollahpouri, Robin Burke, and Bamshad Mobasher.
\newblock Controlling popularity bias in learning-to-rank recommendation.
\newblock In \emph{Proceedings of the eleventh ACM conference on recommender
  systems}, pages 42--46, 2017.

\bibitem[Apel et~al.(2022)Apel, Erev, Reichart, and
  Tennenholtz]{apel2020predicting}
Reut Apel, Ido Erev, Roi Reichart, and Moshe Tennenholtz.
\newblock Predicting decisions in language based persuasion games.
\newblock \emph{Journal of Artificial Intelligence Research}, 73:\penalty0
  1025--1091, 2022.

\bibitem[Arjovsky et~al.(2019)Arjovsky, Bottou, Gulrajani, and
  Lopez-Paz]{arjovsky2019invariant}
Martin Arjovsky, L{\'e}on Bottou, Ishaan Gulrajani, and David Lopez-Paz.
\newblock Invariant risk minimization.
\newblock \emph{arXiv preprint arXiv:1907.02893}, 2019.

\bibitem[Bellot and van~der Schaar(2020)]{bellot2020generalization}
Alexis Bellot and Mihaela van~der Schaar.
\newblock Generalization and invariances in the presence of unobserved
  confounding.
\newblock \emph{arXiv preprint arXiv:2007.10653}, 2020.

\bibitem[Ben-David et~al.(2022)Ben-David, Oved, and Reichart]{ben2021pada}
Eyal Ben-David, Nadav Oved, and Roi Reichart.
\newblock Pada: Example-based prompt learning for on-the-fly adaptation to
  unseen domains.
\newblock \emph{Transactions of the Association for Computational Linguistics},
  10:\penalty0 414--433, 2022.

\bibitem[Blanchard et~al.(2011)Blanchard, Lee, and
  Scott]{blanchard2011generalizing}
Gilles Blanchard, Gyemin Lee, and Clayton Scott.
\newblock Generalizing from several related classification tasks to a new
  unlabeled sample.
\newblock \emph{Advances in neural information processing systems},
  24:\penalty0 2178--2186, 2011.

\bibitem[Bonner and Vasile(2018)]{bonner2018causal}
Stephen Bonner and Flavian Vasile.
\newblock Causal embeddings for recommendation.
\newblock In \emph{Proceedings of the 12th ACM conference on recommender
  systems}, pages 104--112, 2018.

\bibitem[Bottou et~al.(2013)Bottou, Peters, Qui{\~n}onero-Candela, Charles,
  Chickering, Portugaly, Ray, Simard, and Snelson]{bottou2013counterfactual}
L{\'e}on Bottou, Jonas Peters, Joaquin Qui{\~n}onero-Candela, Denis~X Charles,
  D~Max Chickering, Elon Portugaly, Dipankar Ray, Patrice Simard, and
  Ed~Snelson.
\newblock Counterfactual reasoning and learning systems: The example of
  computational advertising.
\newblock \emph{Journal of Machine Learning Research}, 14\penalty0 (11), 2013.

\bibitem[B{\"u}hlmann(2020)]{buhlmann2020invariance}
Peter B{\"u}hlmann.
\newblock Invariance, causality and robustness.
\newblock \emph{Statistical Science}, 35\penalty0 (3):\penalty0 404--426, 2020.

\bibitem[Callahan and Shah(2017)]{callahan2017machine}
Alison Callahan and Nigam~H Shah.
\newblock Machine learning in healthcare.
\newblock In \emph{Key Advances in Clinical Informatics}, pages 279--291.
  Elsevier, 2017.

\bibitem[Chaney et~al.(2018)Chaney, Stewart, and
  Engelhardt]{chaney2018algorithmic}
Allison~JB Chaney, Brandon~M Stewart, and Barbara~E Engelhardt.
\newblock How algorithmic confounding in recommendation systems increases
  homogeneity and decreases utility.
\newblock In \emph{Proceedings of the 12th ACM Conference on Recommender
  Systems}, pages 224--232, 2018.

\bibitem[Dandekar et~al.(2013)Dandekar, Goel, and Lee]{dandekar2013biased}
Pranav Dandekar, Ashish Goel, and David~T Lee.
\newblock Biased assimilation, homophily, and the dynamics of polarization.
\newblock \emph{Proceedings of the National Academy of Sciences}, 110\penalty0
  (15):\penalty0 5791--5796, 2013.

\bibitem[Devlin et~al.(2019)Devlin, Chang, Lee, and Toutanova]{devlin2018bert}
Jacob Devlin, Ming-Wei Chang, Kenton Lee, and Kristina Toutanova.
\newblock Bert: Pre-training of deep bidirectional transformers for language
  understanding.
\newblock In \emph{Proceedings of NAACL-HLT}, pages 4171--4186, 2019.

\bibitem[Eliaz et~al.(2020)Eliaz, Spiegler, and Weiss]{eliaz2020cheating}
Kfir Eliaz, Ran Spiegler, and Yair Weiss.
\newblock Cheating with models.
\newblock \emph{American Economic Review: Insights}, 2020.

\bibitem[Eliaz et~al.(2021)Eliaz, Spiegler, and Thysen]{eliaz2021strategic}
Kfir Eliaz, Ran Spiegler, and Heidi~C Thysen.
\newblock Strategic interpretations.
\newblock \emph{Journal of Economic Theory}, 192:\penalty0 105192, 2021.

\bibitem[Esponda and Pouzo(2016)]{esponda2016berk}
Ignacio Esponda and Demian Pouzo.
\newblock Berk--nash equilibrium: A framework for modeling agents with
  misspecified models.
\newblock \emph{Econometrica}, 84\penalty0 (3):\penalty0 1093--1130, 2016.

\bibitem[Eyster and Rabin(2005)]{eyster2005cursed}
Erik Eyster and Matthew Rabin.
\newblock Cursed equilibrium.
\newblock \emph{Econometrica}, 73\penalty0 (5):\penalty0 1623--1672, 2005.

\bibitem[Faddoul et~al.(2020)Faddoul, Chaslot, and
  Farid]{faddoul2020longitudinal}
Marc Faddoul, Guillaume Chaslot, and Hany Farid.
\newblock A longitudinal analysis of youtube's promotion of conspiracy videos.
\newblock \emph{arXiv preprint arXiv:2003.03318}, 2020.

\bibitem[Feder et~al.(2021{\natexlab{a}})Feder, Keith, Manzoor, Pryzant,
  Sridhar, Wood-Doughty, Eisenstein, Grimmer, Reichart, Roberts,
  et~al.]{feder2021causal}
Amir Feder, Katherine~A Keith, Emaad Manzoor, Reid Pryzant, Dhanya Sridhar,
  Zach Wood-Doughty, Jacob Eisenstein, Justin Grimmer, Roi Reichart, Margaret~E
  Roberts, et~al.
\newblock Causal inference in natural language processing: Estimation,
  prediction, interpretation and beyond.
\newblock \emph{arXiv preprint arXiv:2109.00725}, 2021{\natexlab{a}}.

\bibitem[Feder et~al.(2021{\natexlab{b}})Feder, Oved, Shalit, and
  Reichart]{feder2021causalm}
Amir Feder, Nadav Oved, Uri Shalit, and Roi Reichart.
\newblock Causalm: Causal model explanation through counterfactual language
  models.
\newblock \emph{Computational Linguistics}, 47\penalty0 (2):\penalty0 333--386,
  2021{\natexlab{b}}.

\bibitem[Glymour et~al.(2019)Glymour, Zhang, and Spirtes]{glymour2019review}
Clark Glymour, Kun Zhang, and Peter Spirtes.
\newblock Review of causal discovery methods based on graphical models.
\newblock \emph{Frontiers in genetics}, 10:\penalty0 524, 2019.

\bibitem[Gretton et~al.(2012)Gretton, Borgwardt, Rasch, Sch{\"o}lkopf, and
  Smola]{gretton2012kernel}
Arthur Gretton, Karsten~M Borgwardt, Malte~J Rasch, Bernhard Sch{\"o}lkopf, and
  Alexander Smola.
\newblock A kernel two-sample test.
\newblock \emph{The Journal of Machine Learning Research}, 13\penalty0
  (1):\penalty0 723--773, 2012.

\bibitem[Heinze-Deml et~al.(2018)Heinze-Deml, Peters, and
  Meinshausen]{heinze2018invariant}
Christina Heinze-Deml, Jonas Peters, and Nicolai Meinshausen.
\newblock Invariant causal prediction for nonlinear models.
\newblock \emph{Journal of Causal Inference}, 6\penalty0 (2), 2018.

\bibitem[Jacovi et~al.(2021)Jacovi, Marasovi{\'c}, Miller, and
  Goldberg]{jacovi2021formalizing}
Alon Jacovi, Ana Marasovi{\'c}, Tim Miller, and Yoav Goldberg.
\newblock Formalizing trust in artificial intelligence: Prerequisites, causes
  and goals of human trust in ai.
\newblock In \emph{Proceedings of the 2021 ACM Conference on Fairness,
  Accountability, and Transparency}, pages 624--635, 2021.

\bibitem[Kahneman et~al.(1982)Kahneman, Slovic, Slovic, and
  Tversky]{kahneman1982judgment}
Daniel Kahneman, Stewart~Paul Slovic, Paul Slovic, and Amos Tversky.
\newblock \emph{Judgment under uncertainty: Heuristics and biases}.
\newblock Cambridge university press, 1982.

\bibitem[Koh et~al.(2021)Koh, Sagawa, Marklund, Xie, Zhang, Balsubramani, Hu,
  Yasunaga, Phillips, Gao, et~al.]{koh2020wilds}
Pang~Wei Koh, Shiori Sagawa, Henrik Marklund, Sang~Michael Xie, Marvin Zhang,
  Akshay Balsubramani, Weihua Hu, Michihiro Yasunaga, Richard~Lanas Phillips,
  Irena Gao, et~al.
\newblock Wilds: A benchmark of in-the-wild distribution shifts.
\newblock In \emph{International Conference on Machine Learning}, pages
  5637--5664. PMLR, 2021.

\bibitem[Krauth et~al.(2020)Krauth, Dean, Zhao, Guo, Curmei, Recht, and
  Jordan]{krauth2020offline}
Karl Krauth, Sarah Dean, Alex Zhao, Wenshuo Guo, Mihaela Curmei, Benjamin
  Recht, and Michael~I Jordan.
\newblock Do offline metrics predict online performance in recommender systems?
\newblock \emph{arXiv preprint arXiv:2011.07931}, 2020.

\bibitem[Krueger et~al.(2021)Krueger, Caballero, Jacobsen, Zhang, Binas, Zhang,
  Le~Priol, and Courville]{krueger2020out}
David Krueger, Ethan Caballero, Joern-Henrik Jacobsen, Amy Zhang, Jonathan
  Binas, Dinghuai Zhang, Remi Le~Priol, and Aaron Courville.
\newblock Out-of-distribution generalization via risk extrapolation (rex).
\newblock In \emph{International Conference on Machine Learning}, pages
  5815--5826. PMLR, 2021.

\bibitem[Liang et~al.(2016{\natexlab{a}})Liang, Charlin, and
  Blei]{liang2016causal}
Dawen Liang, Laurent Charlin, and David~M Blei.
\newblock Causal inference for recommendation.
\newblock In \emph{Causation: Foundation to Application, Workshop at UAI.
  AUAI}, 2016{\natexlab{a}}.

\bibitem[Liang et~al.(2016{\natexlab{b}})Liang, Charlin, McInerney, and
  Blei]{liang2016modeling}
Dawen Liang, Laurent Charlin, James McInerney, and David~M Blei.
\newblock Modeling user exposure in recommendation.
\newblock In \emph{Proceedings of the 25th international conference on World
  Wide Web}, pages 951--961, 2016{\natexlab{b}}.

\bibitem[Magliacane et~al.(2018)Magliacane, van Ommen, Claassen, Bongers,
  Versteeg, and Mooij]{magliacane2018domain}
Sara Magliacane, Thijs van Ommen, Tom Claassen, Stephan Bongers, Philip
  Versteeg, and Joris~M Mooij.
\newblock Domain adaptation by using causal inference to predict invariant
  conditional distributions.
\newblock In \emph{Proceedings of the 32nd International Conference on Neural
  Information Processing Systems}, pages 10869--10879, 2018.

\bibitem[McAuley et~al.(2012)McAuley, Leskovec, and
  Jurafsky]{mcauley2012learning}
Julian McAuley, Jure Leskovec, and Dan Jurafsky.
\newblock Learning attitudes and attributes from multi-aspect reviews.
\newblock In \emph{2012 IEEE 12th International Conference on Data Mining},
  pages 1020--1025. IEEE, 2012.

\bibitem[McCoy et~al.(2019)McCoy, Pavlick, and Linzen]{mccoy2019right}
Tom McCoy, Ellie Pavlick, and Tal Linzen.
\newblock Right for the wrong reasons: Diagnosing syntactic heuristics in
  natural language inference.
\newblock In \emph{Proceedings of the 57th Annual Meeting of the Association
  for Computational Linguistics}, pages 3428--3448, 2019.

\bibitem[Mladenov et~al.(2021)Mladenov, Hsu, Jain, Ie, Colby, Mayoraz, Pham,
  Tran, Vendrov, and Boutilier]{mladenov2021recsim}
Martin Mladenov, Chih-Wei Hsu, Vihan Jain, Eugene Ie, Christopher Colby,
  Nicolas Mayoraz, Hubert Pham, Dustin Tran, Ivan Vendrov, and Craig Boutilier.
\newblock Recsim ng: Toward principled uncertainty modeling for recommender
  ecosystems.
\newblock \emph{arXiv preprint arXiv:2103.08057}, 2021.

\bibitem[Pearl(2009)]{pearl2009causality}
Judea Pearl.
\newblock \emph{Causality}.
\newblock Cambridge university press, 2009.

\bibitem[Peters et~al.(2016)Peters, B{\"u}hlmann, and
  Meinshausen]{peters2016causal}
Jonas Peters, Peter B{\"u}hlmann, and Nicolai Meinshausen.
\newblock Causal inference by using invariant prediction: identification and
  confidence intervals.
\newblock \emph{Journal of the Royal Statistical Society. Series B (Statistical
  Methodology)}, pages 947--1012, 2016.

\bibitem[Plonsky et~al.(2019)Plonsky, Apel, Ert, Tennenholtz, Bourgin,
  Peterson, Reichman, Griffiths, Russell, Carter,
  et~al.]{plonsky2019predicting}
Ori Plonsky, Reut Apel, Eyal Ert, Moshe Tennenholtz, David Bourgin, Joshua~C
  Peterson, Daniel Reichman, Thomas~L Griffiths, Stuart~J Russell, Evan~C
  Carter, et~al.
\newblock Predicting human decisions with behavioral theories and machine
  learning.
\newblock \emph{arXiv preprint arXiv:1904.06866}, 2019.

\bibitem[Raifer et~al.(2022)Raifer, Rotman, Apel, Tennenholtz, and
  Reichart]{raifer2021designing}
Maya Raifer, Guy Rotman, Reut Apel, Moshe Tennenholtz, and Roi Reichart.
\newblock Designing an automatic agent for repeated language--based persuasion
  games.
\newblock \emph{Transactions of the Association for Computational Linguistics},
  10:\penalty0 307--324, 2022.

\bibitem[Schmit and Riquelme(2018)]{schmit2018human}
Sven Schmit and Carlos Riquelme.
\newblock Human interaction with recommendation systems.
\newblock In \emph{International Conference on Artificial Intelligence and
  Statistics}, pages 862--870. PMLR, 2018.

\bibitem[Schnabel et~al.(2016)Schnabel, Swaminathan, Singh, Chandak, and
  Joachims]{schnabel2016recommendations}
Tobias Schnabel, Adith Swaminathan, Ashudeep Singh, Navin Chandak, and Thorsten
  Joachims.
\newblock Recommendations as treatments: Debiasing learning and evaluation.
\newblock In \emph{international conference on machine learning}, pages
  1670--1679. PMLR, 2016.

\bibitem[Sch{\"o}lkopf et~al.(2012)Sch{\"o}lkopf, Janzing, Peters, Sgouritsa,
  Zhang, and Mooij]{scholkopf2012causal}
Bernhard Sch{\"o}lkopf, Dominik Janzing, Jonas Peters, Eleni Sgouritsa, Kun
  Zhang, and Joris~M Mooij.
\newblock On causal and anticausal learning.
\newblock In \emph{ICML}, 2012.

\bibitem[Shalizi(2013)]{shalizi2013advanced}
Cosma Shalizi.
\newblock Advanced data analysis from an elementary point of view, 2013.

\bibitem[Sharma et~al.(2015)Sharma, Hofman, and Watts]{sharma2015estimating}
Amit Sharma, Jake~M Hofman, and Duncan~J Watts.
\newblock Estimating the causal impact of recommendation systems from
  observational data.
\newblock In \emph{Proceedings of the Sixteenth ACM Conference on Economics and
  Computation}, pages 453--470, 2015.

\bibitem[Siddiqi(2012)]{siddiqi2012credit}
Naeem Siddiqi.
\newblock \emph{Credit risk scorecards: developing and implementing intelligent
  credit scoring}, volume~3.
\newblock John Wiley \& Sons, 2012.

\bibitem[Sloman(2005)]{sloman2005causal}
Steven Sloman.
\newblock \emph{Causal models: How people think about the world and its
  alternatives}.
\newblock Oxford University Press, 2005.

\bibitem[Spiegler(2011)]{spiegler2011bounded}
Ran Spiegler.
\newblock \emph{Bounded rationality and industrial organization}.
\newblock Oxford University Press, 2011.

\bibitem[Spiegler(2020)]{spiegler2020behavioral}
Ran Spiegler.
\newblock Behavioral implications of causal misperceptions.
\newblock \emph{Annual Review of Economics}, 12:\penalty0 81--106, 2020.

\bibitem[Spirtes and Glymour(1991)]{spirtes1991algorithm}
Peter Spirtes and Clark Glymour.
\newblock An algorithm for fast recovery of sparse causal graphs.
\newblock \emph{Social science computer review}, 9\penalty0 (1):\penalty0
  62--72, 1991.

\bibitem[Spirtes et~al.(2000)Spirtes, Glymour, Scheines, and
  Heckerman]{spirtes2000causation}
Peter Spirtes, Clark~N Glymour, Richard Scheines, and David Heckerman.
\newblock \emph{Causation, prediction, and search}.
\newblock MIT press, 2000.

\bibitem[Su et~al.(2016)Su, Sharma, and Goel]{su2016effect}
Jessica Su, Aneesh Sharma, and Sharad Goel.
\newblock The effect of recommendations on network structure.
\newblock In \emph{Proceedings of the 25th international conference on World
  Wide Web}, pages 1157--1167, 2016.

\bibitem[Subbaswamy et~al.(2019)Subbaswamy, Schulam, and
  Saria]{subbaswamy2019preventing}
Adarsh Subbaswamy, Peter Schulam, and Suchi Saria.
\newblock Preventing failures due to dataset shift: Learning predictive models
  that transport.
\newblock In \emph{The 22nd International Conference on Artificial Intelligence
  and Statistics}, pages 3118--3127. PMLR, 2019.

\bibitem[Sun and Saenko(2016)]{sun2016deep}
Baochen Sun and Kate Saenko.
\newblock Deep coral: Correlation alignment for deep domain adaptation.
\newblock In \emph{European conference on computer vision}, pages 443--450.
  Springer, 2016.

\bibitem[Thurstone(1927)]{thurstone1927law}
Louis~L Thurstone.
\newblock A law of comparative judgment.
\newblock \emph{Psychological review}, 34\penalty0 (4):\penalty0 273, 1927.

\bibitem[Veitch et~al.(2021)Veitch, D'Amour, Yadlowsky, and
  Eisenstein]{veitch2021counterfactual}
Victor Veitch, Alexander D'Amour, Steve Yadlowsky, and Jacob Eisenstein.
\newblock Counterfactual invariance to spurious correlations in text
  classification.
\newblock \emph{Advances in Neural Information Processing Systems}, 34, 2021.

\bibitem[Wald et~al.(2021)Wald, Feder, Greenfeld, and
  Shalit]{wald2021calibration}
Yoav Wald, Amir Feder, Daniel Greenfeld, and Uri Shalit.
\newblock On calibration and out-of-domain generalization.
\newblock \emph{Advances in Neural Information Processing Systems}, 34, 2021.

\bibitem[Wang et~al.(2021)Wang, Feng, He, Wang, and Chua]{wang2021deconfounded}
Wenjie Wang, Fuli Feng, Xiangnan He, Xiang Wang, and Tat-Seng Chua.
\newblock Deconfounded recommendation for alleviating bias amplification.
\newblock In \emph{Proceedings of the 27th ACM SIGKDD Conference on Knowledge
  Discovery \& Data Mining}, pages 1717--1725, 2021.

\bibitem[Wang et~al.(2018)Wang, Liang, Charlin, and Blei]{wang2018deconfounded}
Yixin Wang, Dawen Liang, Laurent Charlin, and David~M Blei.
\newblock The deconfounded recommender: A causal inference approach to
  recommendation.
\newblock \emph{arXiv preprint arXiv:1808.06581}, 2018.

\bibitem[Wuest et~al.(2016)Wuest, Weimer, Irgens, and Thoben]{wuest2016machine}
Thorsten Wuest, Daniel Weimer, Christopher Irgens, and Klaus-Dieter Thoben.
\newblock Machine learning in manufacturing: advantages, challenges, and
  applications.
\newblock \emph{Production \& Manufacturing Research}, 4\penalty0 (1):\penalty0
  23--45, 2016.

\bibitem[Zhang et~al.(2021)Zhang, Feng, He, Wei, Song, Ling, and
  Zhang]{zhang2021causal}
Yang Zhang, Fuli Feng, Xiangnan He, Tianxin Wei, Chonggang Song, Guohui Ling,
  and Yongdong Zhang.
\newblock Causal intervention for leveraging popularity bias in recommendation.
\newblock In \emph{Proceedings of the 44th International ACM SIGIR Conference
  on Research and Development in Information Retrieval}, pages 11--20, 2021.

\end{thebibliography}

\newpage
\appendix



\section{Details on Formal Claims} \label{sec:app_theory}
Our claim in Proposition~\ref{prop:class} is also based on the setting of \citet{veitch2021counterfactual}. Under the assumption that $e$ is discrete, Lemma~3.1 of \citep{veitch2021counterfactual} ensures that there exists a random variable $(x,r)^{\perp}_e$ such that $f_u(x,r)$ is CI if and only if it is $(x,r)^{\perp}_e$-measurable. Then we will assume that $x,r$ can be decomposed into parts $x,r_{y\wedge e}, x,r^{\perp}_{y}, x,r^{\perp}_{e}$. Note that we do not assume that we know how to decompose our features in this manner, nor we assume anything about the semantic meaning of these components. We only assume that this decomposition exists, and then the main assumption made in \citep{veitch2021counterfactual} is that the graph in Fig.~\ref{graph-causal}a conforms to the structures in Figure~\ref{user-types-expanded} for each user type.
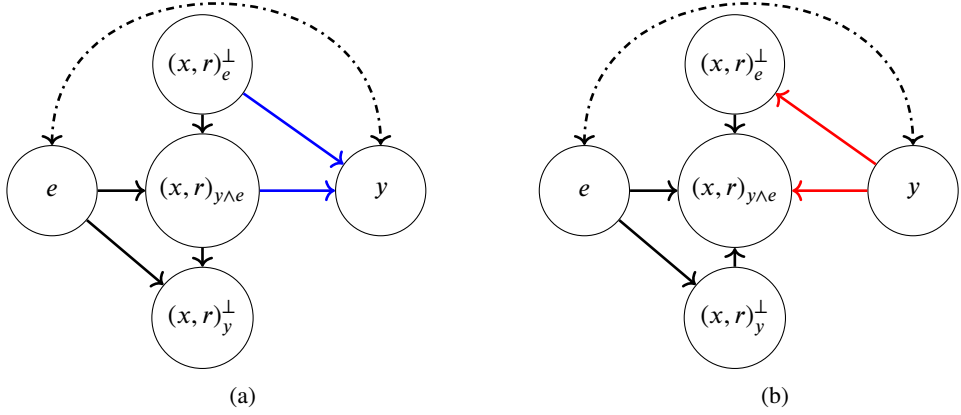
\begin{figure}[ht]
\captionsetup[subfigure]{labelformat=empty}
    \centering
    \begin{subfigure}{0.45\textwidth}
    \begin{tikzpicture}[node distance = 2cm]
        \node[circle, draw, text centered, minimum size=1.2cm] (x) {$(x,r)_{y\wedge e}$};
        \node[circle, draw, text centered, minimum size=1.2cm, above=0.25cm of x] (x_caus) {$(x,r)_{e}^{\perp}$};
        \node[circle, draw, text centered, minimum size=1.2cm, below=0.25cm of x] (x_spu) {$(x,r)_{y}^{\perp}$};
        \node[circle, draw, minimum size=1.2cm, text centered, right=1cm of x] (y) {$y$};
        \node[circle, draw, left of = x, minimum size=1.2cm, text centered] (e) {$\env$};
        
        \draw[blue, ->, line width = 1] (x) -- (y);
        \draw[blue, ->, line width = 1] (x_caus) -- (y);
        \draw[->, line width = 1] (e) -- (x);
        \draw[->, line width = 1] (e) -- (x_spu);
        \draw[->, line width = 1] (x_caus) -- (x);
        \draw[->, line width = 1] (x) -- (x_spu);
        \draw[<->, line width = 1, dash dot] (e) to[out=90, in=90, distance=2.5cm] (y);
        \end{tikzpicture}
    \caption{(a)}\label{graph-causal-expanded}
    \end{subfigure}
    \qquad
    \begin{subfigure}{0.45\textwidth}
    \begin{tikzpicture}[node distance = 2cm]
        \node[circle, draw, text centered, minimum size=1.2cm] (x) {$(x,r)_{y\wedge e}$};
        \node[circle, draw, text centered, minimum size=1.2cm, above=0.25cm of x] (x_caus) {$(x,r)_{e}^{\perp}$};
        \node[circle, draw, text centered, minimum size=1.2cm, below=0.25cm of x] (x_spu) {$(x,r)_{y}^{\perp}$};
        \node[circle, draw, minimum size=1.2cm, text centered, right=1cm of x] (y) {$y$};
        \node[circle, draw, left of = x, minimum size=1.2cm, text centered] (e) {$\env$};
        
        \draw[red, ->, line width = 1] (y) -- (x);
        \draw[red, ->, line width = 1] (y) -- (x_caus);
        \draw[->, line width = 1] (e) -- (x);
        \draw[->, line width = 1] (e) -- (x_spu);
        \draw[->, line width = 1] (x_spu) -- (x);
        \draw[->, line width = 1] (x_caus) -- (x);
        \draw[<->, line width = 1, dash dot] (e) to[out=90, in=90, distance=2.5cm] (y);
    \end{tikzpicture}
    \caption{(b)}\label{graph-anti-causal-expanded}
    \end{subfigure}
    \caption{
    Detailed graphs describing our assumptions on causal and anti-causal users
    (a) causal model for data generating process of \textcolor{blue}{\textit{causal}} user, and
    (b) \textcolor{red}{\textit{anti-causal}} user.
    Dashed lines indicate possible confounding.
    }
    \label{user-types-expanded}
    \vspace{-2mm}
\end{figure}

We are now ready to state Proposition~\ref{prop:class} in a more precise manner 

\begin{proposition}
Let $f$ be a CI model and assume $y$ and $\env$ are confounded (i.e. they are connected by an unobserved common cause $c$ or by a directed path). Further assume that $D^e(x,r,y \mid u)$ is entailed by the causal models in Fig.~\ref{user-types-expanded} for $u=\uc$ and $u=\uac$.
Then the following holds:
    \setlist{nolistsep}
    \begin{enumerate}[noitemsep,leftmargin=0.5cm]
    \item $f_{\uc}$ must satisfy $D^e(f_{\uc}(x,r)) = D^{e'}(f_{\uc}(x,r))\,\,\, \forall e,e'\in{\envs}$.
    \item $f_{\uac}$ must satisfy $D^e(f_{\uac}(x,r) \mid y) = D^{e'}(f_{\uac}(x,r) \mid y) \,\,\, \forall e,e'\in{\envs}$, $y\in{\{0, 1\}}$.
    \end{enumerate}
On the other hand, $f_\uac$ and $f_\uc$ do not necessarily satisfy conditions 1 and 2, respectively. 
\end{proposition}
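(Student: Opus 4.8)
The plan is to deduce the statement from Lemma~3.1 of \citet{veitch2021counterfactual} together with two short d-separation computations on the expanded graphs of Fig.~\ref{user-types-expanded}. Since $\env$ is discrete, that lemma says a model $f_u$ is CI if and only if it is measurable with respect to $(x,r)^{\perp}_e$, the only component of $(x,r)$ not counterfactually affected by $\env$. Consequently every (conditional) independence that $(x,r)^{\perp}_e$ enjoys with $\env$ in the assumed graph is automatically inherited by any CI predictor, while the only constraints a CI predictor is \emph{forced} to satisfy are the ones implied by the independence structure of $(x,r)^{\perp}_e$. So the proposition reduces to (i) reading off the independences of $(x,r)^{\perp}_e$, and (ii) exhibiting models in which the complementary ones fail.

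For the positive half I would enumerate paths. In Fig.~\ref{graph-causal-expanded} every path between $(x,r)^{\perp}_e$ and $\env$ is blocked unconditionally: the path through $(x,r)_{y\wedge e}$ hits the collider there, and each path reaching $\env$ via $y$ --- whether along $(x,r)_{y\wedge e}\to y$ or along the confounding edge $y\leftrightarrow\env$ --- hits the collider at $y$. Hence $(x,r)^{\perp}_e\indep\env$, so $f_{\uc}(x,r)\indep\env$, which is condition~1. In Fig.~\ref{graph-anti-causal-expanded}, by contrast, I would condition on $y$: the single marginally-open path $(x,r)^{\perp}_e\leftarrow y\leftrightarrow\env$ is then blocked, while all remaining paths pass through the collider $(x,r)_{y\wedge e}$ (neither observed nor an ancestor of $y$), so $(x,r)^{\perp}_e\indep\env\mid y$ and $f_{\uac}(x,r)\indep\env\mid y$, which is condition~2. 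The confounding hypothesis is exactly what makes these two conditions genuinely different; drop it and both collapse to marginal independence.

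For the ``on the other hand'' half I would show each constraint can break for the other user type. For $\uc$, conditioning on $y$ \emph{opens} the collider at $y$, leaving $(x,r)^{\perp}_e\to y\leftarrow (x,r)_{y\wedge e}\leftarrow\env$ (and $(x,r)^{\perp}_e\to y\leftrightarrow\env$) active given $y$; for $\uac$, the path $(x,r)^{\perp}_e\leftarrow y\leftrightarrow\env$ is active marginally. In each case I would write down a small explicit instantiation of the corresponding graph --- a two-state discrete or a linear-Gaussian model --- in which the relevant dependence is nonzero, and take $f$ to be any non-constant measurable function of $(x,r)^{\perp}_e$ (post-composed with a fixed link if probabilistic outputs are wanted); such an $f$ is CI yet violates condition~2 (for $\uc$) or condition~1 (for $\uac$).

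The main obstacle is precisely this last step: d-connection only guarantees that \emph{some} distribution Markov and faithful to the graph realizes the dependence, so the ``need not hold'' claims are not literally implied by the path analysis alone. I would close the gap by pinning down an explicit parametric family consistent with Fig.~\ref{user-types-expanded} and computing the dependence by hand; this also serves as a sanity check that the structural assumptions borrowed from \citet{veitch2021counterfactual} are actually satisfiable for our user graphs. Everything else --- the invocation of Lemma~3.1 and the d-separation bookkeeping --- is routine.
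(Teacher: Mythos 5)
Your proposal matches the paper's proof in all essentials: both reduce counterfactual invariance to $(x,r)^{\perp}_e$-measurability via Lemma~3.1 of \citet{veitch2021counterfactual}, obtain condition~1 for $\uc$ and condition~2 for $\uac$ by reading the (conditional) independencies off the expanded graphs of Fig.~\ref{user-types-expanded} (your d-separation bookkeeping, including the collider-opening role of conditioning on $y$ and the marginally open path $(x,r)^{\perp}_e \leftarrow y \leftrightarrow \env$, is correct), and then establish the ``need not hold'' half by exhibiting a CI, $(x,r)^{\perp}_e$-measurable predictor that violates the other condition. The only difference is how that last existence claim is secured: the paper invokes faithfulness of a confounded model to guarantee the generic dependence and then picks a suitable $(x,r)^{\perp}_e$-measurable function, whereas you propose an explicit discrete or linear-Gaussian instantiation computed by hand---which correctly closes the gap you yourself flag, and also covers the small imprecision that not every non-constant function of $(x,r)^{\perp}_e$ automatically witnesses the dependence.
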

\begin{proof}
Under the assumptions laid out about the causal model, the conditional independence relations can be read off the graph directly, as in Theorem~3.2 of \citep{veitch2021counterfactual}. This proves that the independence properties stated in the proposition must hold.
To see that $f_{\uac}, f_{\uc}$ do not necessarily satisfy properties $1$ and $2$ respectively, we will prove the existence of such cases. Consider a causal model where $e$ and $y$ are confounded, and assume that the model is faithful \citep{pearl2009causality} (i.e. all conditional independence statements that are not entailed by the graph do not hold). Hence for the causal user we generally have $D^e((x,r)_e^\perp \mid y, u=\uc) \neq D^{e'}((x,r)_e^\perp \mid y, u=\uc)$ (at the very least there are values  of $(x,r)^\perp_{e}, y$ for which this holds), and hence there exists some $(x,r)_e^\perp$-measurable function $\hat{f}_{\uc}(x,r)$ that satisfies $D^e(\hat{f(x,r)} \mid y, u=\uc) \neq D^{e'}(\hat{f(x,r)} \mid y, u=\uc)$. The same argument can be applied for the anti-causal user $\uac$ to prove the existence of an $(x,r)_e^\perp$-measurable function $\hat{f}_{\uac}(x,r)$ that satisfies $D^e(\hat{f(x,r)} \mid u=\uac) \neq D^{e'}(\hat{f(x,r)} \mid u=\uac)$. The model $\hat{f}(x,r)$ is CI since the constructed functions are $(x,r)_e^\perp$-measurable, but models $f_{\uac}, f_{\uc}$ do not satisfy conditions $1$ and $2$ respectively, which concludes our claim.
\end{proof}
Next we prove Proposition~\ref{prop:subclass} by constructing a confounded model for an anti-causal user, similar to the one in the synthetic experiment of Section~\ref{subsec:synthetic}. Towards this proposition, we point out that an optimal CI predictor is defined as a CI predictor with the best possible worst case performance. Where the worst case is taken over all distributions that are causally-compatible \cite{veitch2021counterfactual} with the source distribution $D_{\text{train}}$.
\begin{definition}
$D_{\text{train}}$ and $D_{\text{OOD}}$ are causally compatible if they are entailed by the same causal graph, $D_{\text{train}}(y)=D_{\text{OOD}}(y)$, and there is a confounder $c$ and/or selection conditions $s, \tilde{s}$ such that $D_{\text{train}}=\int D_{\text{train}}(x_{sp}, x_{ac}, r, y \mid c, s=1)d\tilde{P}(c)$ and $D_{\text{OOD}}=\int D_{\text{train}}(x_{sp}, x_{ac}, r, y \mid c, \tilde{s}=1)d\tilde{Q}(c)$ for some $\tilde{P}(c), \tilde{Q}(c)$.
\end{definition}
Let us focus now on distributions where $f(x_{sp}, r, x_{ac})$ is counterfactually invariant if and only if it is $(r,x_{ac})$-measurable (the expression $(r,x_{ac})$ should be read as a bivariate random variable). Note again that from Lemma~3.1 of \cite{veitch2021counterfactual} such a variable exists. The following claim will help us reason about the optimal CI model for users of the skeptic sub-class.
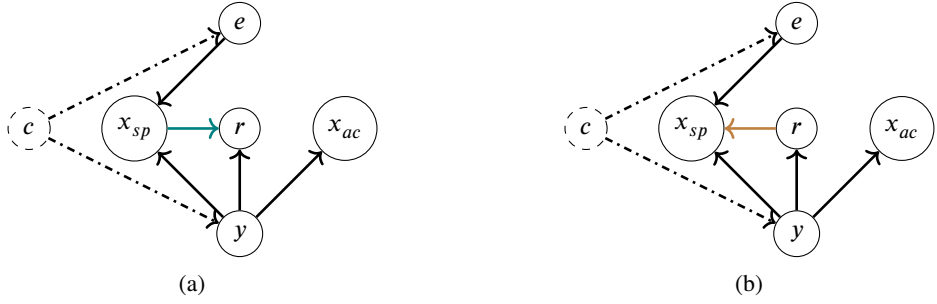
\begin{figure}
\begin{subfigure}[t]{0.47\textwidth}
    \centering
    \begin{tikzpicture}[node distance = 1.4cm]
        \node[circle, draw, text centered] (e) {$\env$};
        \node[circle, draw, below of = e] (r) {$r$};
        \node[circle, draw, left of = r] (x_sp) {$x_{sp}$};
        \node[circle, draw, dashed, left of = x_sp] (c) {$c$};
        \node[circle, draw, right of = r] (x_ac) {$x_{ac}$};
        \node[circle, draw, below of = r] (y) {$y$};

        \draw[->, line width = 1] (y) -- (x_ac);
        \draw[->, line width = 1] (y) -- (x_sp);
        \draw[->, line width = 1] (y) -- (r);
        \draw[teal, ->, line width = 1] (x_sp) -- (r);
        \draw[->, line width = 1] (e) -- (x_sp);
        \draw[->, line width = 1, dash dot] (c) to (y);
        \draw[->, line width = 1, dash dot] (c) to (e);
    \end{tikzpicture}
    \caption{}\label{graph-skeptic-expanded}
  \end{subfigure}
  \hfill
    \begin{subfigure}[t]{0.47\textwidth}
    \centering
    \begin{tikzpicture}[node distance = 1.4cm]
        \node[circle, draw, text centered] (e) {$\env$};
        \node[circle, draw, below of = e] (r) {$r$};
        \node[circle, draw, left of = r] (x_sp) {$x_{sp}$};
        \node[circle, draw, dashed, left of = x_sp] (c) {$c$};
        \node[circle, draw, right of = r] (x_ac) {$x_{ac}$};
        \node[circle, draw, below of = r] (y) {$y$};

        \draw[->, line width = 1] (y) -- (x_ac);
        \draw[->, line width = 1] (y) -- (x_sp);
        \draw[->, line width = 1] (y) -- (r);
        \draw[brown, ->, line width = 1] (r) -- (x_sp);
        \draw[->, line width = 1] (e) -- (x_sp);
        \draw[->, line width = 1, dash dot] (c) to (y);
        \draw[->, line width = 1, dash dot] (c) to (e);
    \end{tikzpicture}
    \caption{}\label{graph-believer-expanded}
    \end{subfigure}
    \caption{Graphs describing the data-generating processes for anti-causal \emph{\textcolor{teal}{believer}} and \emph{\textcolor{brown}{skeptic}} users in the proof of Proposition~\ref{prop:subclass}.
    }
\end{figure}
\begin{lemma}
If $D_{\text{train}}$ is entailed by the graph in Fig.~\ref{graph-skeptic-expanded} and $D_{\text{OOD}}$ is causally compatible with it, then $D_{\text{train}}(y \mid r, x_{ac}) = D_{\text{OOD}}(y \mid r, x_{ac})$.
\end{lemma}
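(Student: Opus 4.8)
The plan is to show that the conditional law of $y$ given $(r, x_{ac})$ is an \emph{invariant} of all distributions causally compatible with $D_{\text{train}}$, using $d$-separation in the graph of Fig.~\ref{graph-skeptic-expanded}. First I would recall from the definition of causal compatibility that any $D_{\text{OOD}}$ differs from $D_{\text{train}}$ only through (i) reweighting of the confounder $c$ (a new mixing measure $\tilde{Q}(c)$ in place of $\tilde{P}(c)$), (ii) a possibly different selection condition $\tilde{s}=1$ in place of $s=1$, and (iii) the same graph and the same marginal $D(y)$. So it suffices to show that, within the structural model of Fig.~\ref{graph-skeptic-expanded}, the quantity $P(y \mid r, x_{ac})$ does not depend on the distribution of $c$ nor on any selection on $e$ or on the confounding path through $c$.

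The key structural observation is that in Fig.~\ref{graph-skeptic-expanded} the skeptic's arrow is $x_{sp}\textcolor{teal}{\rightarrow} r$, and the only parents of $r$ are $y$ and $x_{sp}$, while $e$ and $c$ only enter the graph ``above'' $x_{sp}$ (through $c\to e \to x_{sp}$ and $c \to y$). Hence, conditioning on $r$ and $x_{ac}$: (a) $x_{ac}$ is a pure child of $y$, so it carries only information about $y$; (b) the path from $y$ to $e$ or to $c$ that could re-introduce environment dependence must pass through $x_{sp}$, but $x_{sp}$ is \emph{not} conditioned on, and the collider-free paths $y \to x_{sp} \to e$ and $y \to x_{sp} \leftarrow e$ are either blocked or, when we additionally condition on $r$ (a descendant of the collider $x_{sp}$), re-examined carefully. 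The cleanest route is: write $P(y \mid r, x_{ac}) \propto P(r, x_{ac} \mid y)\,P(y)$; then note $P(y)$ is held fixed by causal compatibility, and $P(r, x_{ac}\mid y) = P(x_{ac}\mid y)\,P(r \mid y)$ because $x_{ac}$ and $r$ are $d$-separated given $y$ in this graph (their only connecting paths go through $y$ or through the collider $x_{sp}$ which is not conditioned on). Finally, $P(x_{ac}\mid y)$ is a fixed structural mechanism ($x_{ac}$'s only parent is $y$), and $P(r\mid y) = \sum_{x_{sp}} P(r\mid x_{sp}, y)\,P(x_{sp}\mid y)$, where $P(r\mid x_{sp},y)$ is again a fixed mechanism and $P(x_{sp}\mid y)$ — \emph{marginalized over $e$ and $c$} — is exactly the object that could in principle shift; one must check that selection and $c$-reweighting leave $P(x_{sp}\mid y)$ unchanged. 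This holds because selection in the compatibility definition is on the joint $(x_{sp}, x_{ac}, r, y)$ conditional given $c$ and, by the form of the integrals defining $D_{\text{train}}$ and $D_{\text{OOD}}$, the conditional $D(x_{sp} \mid y)$ obtained after integrating $c$ out is pinned down by the shared graph together with $D(y)$; I would verify this by writing out the two integral expressions and cancelling.

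I expect the main obstacle to be handling the selection variable carefully: conditioning on $s=1$ is a form of conditioning on a collider-descendant and can in principle open paths between $e$ (hence $c$) and $y$, which is precisely the mechanism that makes the marginal $P(x_{sp})$ or $P(y)$ shift across environments. The resolution is that causal compatibility explicitly \emph{fixes} $D(y)$ and requires the same graph, so the only freedom is in how $x_{sp}$ (and downstream $r$) relate to $y$ through the $e$/$c$ machinery — and the claim is exactly that $(r, x_{ac})$ ``screens off'' that machinery from $y$. Concretely I would argue that, conditioning additionally on $r$, the triple $(r, x_{ac})$ is a valid adjustment-type set making $y \perp e \mid (r, x_{ac})$ in every compatible distribution; equivalently, $(r,x_{ac})$ is $(x,r)^{\perp}_e$-measurable in the sense of Lemma~3.1 of \citep{veitch2021counterfactual}, and a CI predictor's target is stable across compatible shifts. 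The bookkeeping of which paths are blocked once we condition on the collider-descendant $r$ is the delicate part; everything else is a short application of the factorization plus the invariances baked into the compatibility definition.
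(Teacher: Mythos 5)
Your overall route---write $D(y \mid r, x_{ac}) \propto D(r, x_{ac} \mid y)\,D(y)$, use $D_{\text{train}}(y)=D_{\text{OOD}}(y)$ from the compatibility definition, and then argue that the class-conditional law of $(r,x_{ac})$ is invariant---is essentially the paper's route (the paper phrases it through the odds ratio and then shows $D_{\text{train}}(x_{ac},r \mid y)=D_{\text{OOD}}(x_{ac},r \mid y)$ directly from $c \indep r, x_{ac} \mid y$). The gap is in how you establish that last invariance. You route it through the spurious feature, $D(r \mid y)=\sum_{x_{sp}} D(r \mid x_{sp},y)\,D(x_{sp}\mid y)$, and then assert that ``the conditional $D(x_{sp}\mid y)$ obtained after integrating $c$ out is pinned down by the shared graph together with $D(y)$,'' deferring the check to writing out the two integrals and cancelling. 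That check fails: $D(x_{sp}\mid y)$ is exactly the quantity that shifts across causally compatible distributions. After integrating out $c$ one gets $D(x_{sp}\mid y)=\sum_{e}D(x_{sp}\mid e,y)\sum_c D(e \mid c)\,D(c\mid y)$, and the posterior $D(c \mid y)$ is \emph{not} fixed by fixing the marginal $D(y)$; reweighting $\tilde{P}(c)\mapsto \tilde{Q}(c)$, or changing the selection condition, changes it in general. Indeed, if $D(x_{sp}\mid y)$ were invariant, $x_{sp}$ would not be spurious, and the construction in the proof of Prop.~\ref{prop:subclass}---which needs $D^e_{\uskep}(x_{sp}\mid y)$ to vary with $e$---would be impossible.

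The paper's proof avoids this because in the graph its argument actually relies on, the edge between $r$ and $x_{sp}$ points \emph{into} $x_{sp}$ (the appendix sub-figure labels are inconsistent with the arrow conventions of \S\ref{sec:subclasses}, which is a trap you fell into): there $r$'s only parent is $y$ and $x_{sp}$ sits in the v-structure $e \rightarrow x_{sp} \leftarrow r$, so every path from $c$ to $(r,x_{ac})$ is blocked by $y$, giving $c \indep (r,x_{ac}) \mid y$; the integrand $D_{\text{train}}(x_{ac},r\mid y,c)$ is then constant in $c$ and the mixture over $\tilde{P}$ versus $\tilde{Q}$ is irrelevant---no decomposition through $x_{sp}$ is needed. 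With the orientation you describe ($x_{sp}\rightarrow r$, ``the only parents of $r$ are $y$ and $x_{sp}$''), the path $c \rightarrow e \rightarrow x_{sp} \rightarrow r$ is open given $y$, the claimed invariance of $D(y\mid r,x_{ac})$ is false in general (and a counterfactually invariant predictor could not depend on $r$ at all), so no bookkeeping of collider descendants rescues the step. Your auxiliary remark that $(r,x_{ac})$ yields $y \indep e \mid (r,x_{ac})$ also fails in either orientation (the path $y \leftarrow c \rightarrow e$ stays open); the correct statement is the cross-distribution invariance the paper proves, not a within-distribution conditional independence.
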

\begin{proof}
For binary classification, it is enough to show that $\frac{D_{\text{train}}(y=1 \mid r, x_{ac})}{D_{\text{train}}(y=0 \mid r, x_{ac})} = \frac{D_{\text{OOD}}(y=1 \mid r, x_{ac})}{D_{\text{OOD}}(y=0 \mid r, x_{ac})}$. Let us write this for the training distribution:
\begin{align*}
    \frac{D_{\text{train}}(y=1 \mid r, x_{ac})}{D_{\text{train}}(y=0 \mid r, x_{ac})} &= \frac{D_{\text{train}}(r, x_{ac} \mid y=1)D_{\text{train}}(y=1)}{D_{\text{train}}(r, x_{ac} \mid y=0)D_{\text{train}}(y=0)} \\
    &= \frac{D_{\text{train}}(r, x_{ac} \mid y=1)D_{\text{OOD}}(y=1)}{D_{\text{train}}(r, x_{ac} \mid y=0)D_{\text{OOD}}(y=0)}.
\end{align*}
The second equality stems from the causal-compatibility of $D_{\text{OOD}}$. It is left to show that $D_{\text{train}}(y \mid r, x_{ac}) = D_{\text{OOD}}(y \mid r, x_{ac})$. From causal-compatibility the distributions are entailed by the same graph in Fig.~\ref{graph-skeptic-expanded}, which imposes the conditional independence $c \indep r, x_{ac} \mid y $. Hence we conclude the proof by:
\begin{align*}
    D_{\text{train}}(x_{ac}, r \mid y) = \int D_{\text{train}}(x_{ac}, r \mid y, c)d\tilde{P}(c) = \int D_{\text{train}}(x_{ac}, r \mid y, c)d\tilde{Q}(c) = D_{\text{OOD}}(x_{ac}, r \mid y).
\end{align*}
\end{proof}
From this result we gather that if we only consider the features $x_{ac},r$, there is a unique Bayes-optimal classifier over all target distributions that are causally compatible with $D_{\text{train}}$. Since a classifier is CI if and only if it is $(x_{ac},r)$-measurable, we see that for the skeptic sub-class of users the optimal CI model is $f(x_{sp}, r, x_{ac}) = D_{\text{train}}(y \mid r, x_{ac})$. The rest of the proof will simply show that this model may not be CI for a user of sub-type \emph{\textcolor{teal}believer} that has the same choice patterns over observed data pooled from two training environments.
\begin{proof}[Proof of Proposition~\ref{prop:subclass}]
Consider a data generating process as depicted in Figure~\ref{graph-skeptic-expanded}. All variables $x_{sp},r,x_{ac},y,c$ are binary, we consider $2$ training environments $\envs_{\text{train}}=\{0, 1\}$.
We write down the distribution in a factorized form:
\begin{align*}
    D_{\uskep}(x_{sp},x_{ac},r,y) &= \sum_{c\in{\{0,1\}}, e\in{\{0, 1\}}}p(c)p(y \mid c)p(x_{ac} \mid y)p(e \mid c)p_{\uskep}(r \mid y)p_{\uskep}^e(x_{sp} \mid r, y) \\
    &= p(x_{ac} \mid y)p_{\uskep}(r \mid y)\left(\sum_{e\in{\{0, 1\}}}{\tilde{p}(e, y)p^e_{\uskep}(x_{sp}\mid r, y)}\right).
\end{align*}
Here we defined $\tilde{p}(e, y) = \sum_{c\in{0, 1}}{p(y, c) p(e \mid c)}$. The subscripts $\uskep$ emphasize that in the distribution we will construct for the believer user, $D_{\ubel}$, all factors that are not subscripted will be equal to those in $D_{\uskep}$. That is, consider a distribution that factorizes over the graph in Figure~\ref{graph-believer-expanded} as follows:
\begin{align} \label{eq:bel_factor}
    D_{\ubel}(x_{sp}, x_{ac}, r, y) = p(x_{ac} \mid y)p_{\ubel}(r \mid y, x_{sp})\left(\sum_{e\in{\{0, 1\}}}{\tilde{p}(e, y)p^e_{\ubel}(x_{sp} \mid y)}\right).\qquad\qquad
\end{align}
We will show that there exists some setting of $p_{\ubel}(r \mid y, x_{ac}), p^e_{\ubel}(x_{sp} \mid y)$ such that:
\begin{align*}
D_{\uskep}(x_{sp}, x_{ac}, r, y) = D_{\ubel}(x_{sp}, x_{ac}, r, y).
\end{align*}
But it will also satisfy $D^0_{\ubel}(y \mid r, x_{ac}) \neq D^1_{\ubel}(y \mid r, x_{ac})$.
Then the proof will be concluded, as $f(x_{sp}, x_{ac}, r) = D_{\uskep}(y \mid r, x_{ac}) = D_{\ubel}(y \mid r, x_{ac})$ cannot be CI w.r.t $D_{\ubel}$.
This holds since $D_{\ubel}^e(y \mid r, x_{ac}) \neq D_{\ubel}(y \mid r, x_{ac})$ for $e\in{\{0, 1\}}$, hence there must be some instance for which $f(x_{ac}(0), x_{sp}(0), r(0)) \neq f(x_{ac}(1), x_{sp}(1), r(1))$.

Towards this, consider $D_{\uskep}(r \mid y, x_{sp})$ which is obtained by the respective marginalization and conditioning of $D_{\uskep}(x_{sp}, x_{ac}, r, y)$, and also consider $\sum_{e\in{0,1}}{\tilde{p}(e, y)D^e_{\uskep}(x_{sp}\mid y)}$. Let us set:
\begin{align*}
    p_{\ubel}(r \mid y, x_{sp}) := D_{\uskep}(r \mid y, x_{sp}).
\end{align*}
It is clear that if we set $p^e_{\ubel}(x_{sp} \mid y)$ such that the following holds:
\begin{align} \label{eq:required_eq}
    \sum_{e\in{\{0, 1\}}}{\tilde{p}(e, y)p^e_{\ubel}(x_{sp} \mid y)} = \sum_{e\in{\{0, 1\}}}{\tilde{p}(e, y)D^e_{\uskep}(x_{sp} \mid y)},
\end{align}
then the equality $D_{\uskep}(x_{sp},x_{ac},r,y) = D_{\ubel}(x_{sp},x_{ac},r,y)$ also holds. That is because the factorization in \eqref{eq:bel_factor} is a factorization of the joint distribution over $x_{sp}, x_{ac}, r, y$ where all factors are equal to the ones obtained from $D_{\uskep}(x_{sp}, x_{ac}, r, y)$. \footnote{Note that it is easy to observe that the two sides of \eqref{eq:required_eq} are the marginal distribution over $x_{sp}, y$ of the two distributions $D_{\ubel}$ and $D_{\uskep}$ respectively.}

Finally, we claim that many solutions satisfy \eqref{eq:required_eq}. For each value of $y, x_{sp}$  Eq.~\eqref{eq:required_eq} is a linear equation with two variables ($p^0_{\ubel}(x_{sp} \mid y)$ and $p^1_{\ubel}(x_{sp} \mid y)$), and they should be constrained to take values in the range $[0,1]$.
One solution to the equation is to set $p^e_{\ubel}(x_{sp} \mid y):=D^e_{\uskep}(x_{sp} \mid y)$, and unless $D^e_{\uskep}(x_{sp} \mid y)\in{\{0, 1\}}$
for each value of $x_{sp}, y$, and $D^0_{\uskep}(x_{sp} \mid y) = D^1_{\uskep}(x_{sp} \mid y)$ (i.e. the spurious feature completely determines $y$) the set of solutions to the equations forms an interval in $\mathbb{R}^2$, and has Lebesgue measure that is non-zero.

Thus let us consider the set of parameterized (by the factors in \eqref{eq:bel_factor}) distributions $\tilde{D}_{\ubel}(e,x_{sp},r,x_{ac},y)$ that satisfy $\sum_{\tilde{e}}{\tilde{D}_{\ubel}(e=\tilde{e},x_{sp},r,x_{ac},y)} = D_{\uskep}(x_{sp},r,x_{ac},y)$ for the fixed distribution $D_{\uskep}(x_{sp},r,x_{ac},y)$. This set has a non-zero Lebesgue measure over the linearly independent parameters needed to parameterize $D_{\ubel}$. Since the set of parameters that yield unfaithful distributions w.r.t a graph has Lebesgue measure zero \cite{spirtes2000causation}, there must be at least one distribution $\tilde{D}_{\ubel}(e,x_{sp},r,x_{ac},y)$ in the set where the independence $r,x_{ac} \indep e \mid y$ does \emph{not} hold. For such a distribution we will have $D^e_{\ubel}(y \mid r, x_{ac}) \neq D_{\ubel}(y \mid r, x_{ac})$, which is what was required to conclude the proof.

\end{proof}

\section{Experimental Details}
\label{sec:app_data}

Code and data for all experiments can be found in the following anonymous link:\\
\url{https://drive.google.com/drive/folders/1bO57v4PUuUh76F_q0a_xAVx6CKdeDJ5l}


\subsection{\emph{RecBeer} (causal users)}

\paragraph{Original Dataset description.}
The original \emph{RateBeer} dataset includes textual reviews and numerical ratings of roughly 3000 unique beers, collected over the span of over 11 years.
Each review data-point also includes additional features describing the beer (e.g., brand, style), the author of the review (e.g., location), and the review itself (e.g., date).
Figure \ref{fig:rate_beer} shows an example of a data point.
Table \ref{tab:ratebeer_stats} provides summary statistics.


\begin{table}[t!]
\centering
\small
 \begin{minipage}[t]{0.45\linewidth}
    \centering
    \caption{Original \textit{RateBeer} dataset statistics.}
    \begin{tabular}{l|l} \toprule 
        Number of reviews & $2,924,127$ \\
        Number of users & $40,213$ \\
        Number of beers & $110,419$ \\
        Users with > 50 reviews & $4,798$ \\
        Median \#words per review & $54$ \\
        Timespan & 4/2000-11/2011 \\ \bottomrule
    \end{tabular}
    \label{tab:ratebeer_stats}
\end{minipage}
\quad
\begin{minipage}[t]{0.45\linewidth}
    \centering
    \caption{Our \textit{RecBeer} data features.}
    \begin{tabular}{l|c|l}  \toprule 
        Variable type & Not.
        & Description \\ \midrule \midrule
        \multirow{6}{*}{Item} & \multirow{6}{*}{$x$} & avg past appearance \\
        & & avg past aroma \\
        & & avg past palate \\
        & & avg past taste \\ 
        & & \# of active years \\ 
        & & alcohol percentage \\ 
        & & beer type \\ \midrule
        \multirow{3}{*}{User} & \multirow{3}{*}{$u$} & avg past satisfaction \\
        & & \# of past choices \\
        & & \# of active years \\ \midrule
        \multirow{2}{*}{Recommendation} & \multirow{2}{*}{$r$} & text review \\ 
        & & \# of past reviews \\ \midrule
        Time & $\env$ & year \\  \midrule
        Choice & $y$ & try beer/not \\ \bottomrule
    \end{tabular}
    
    \label{tab:recbeer_features}
    \end{minipage}
\end{table}

\begin{figure}[p!]
    \centering
    \includegraphics[width=0.8\linewidth]{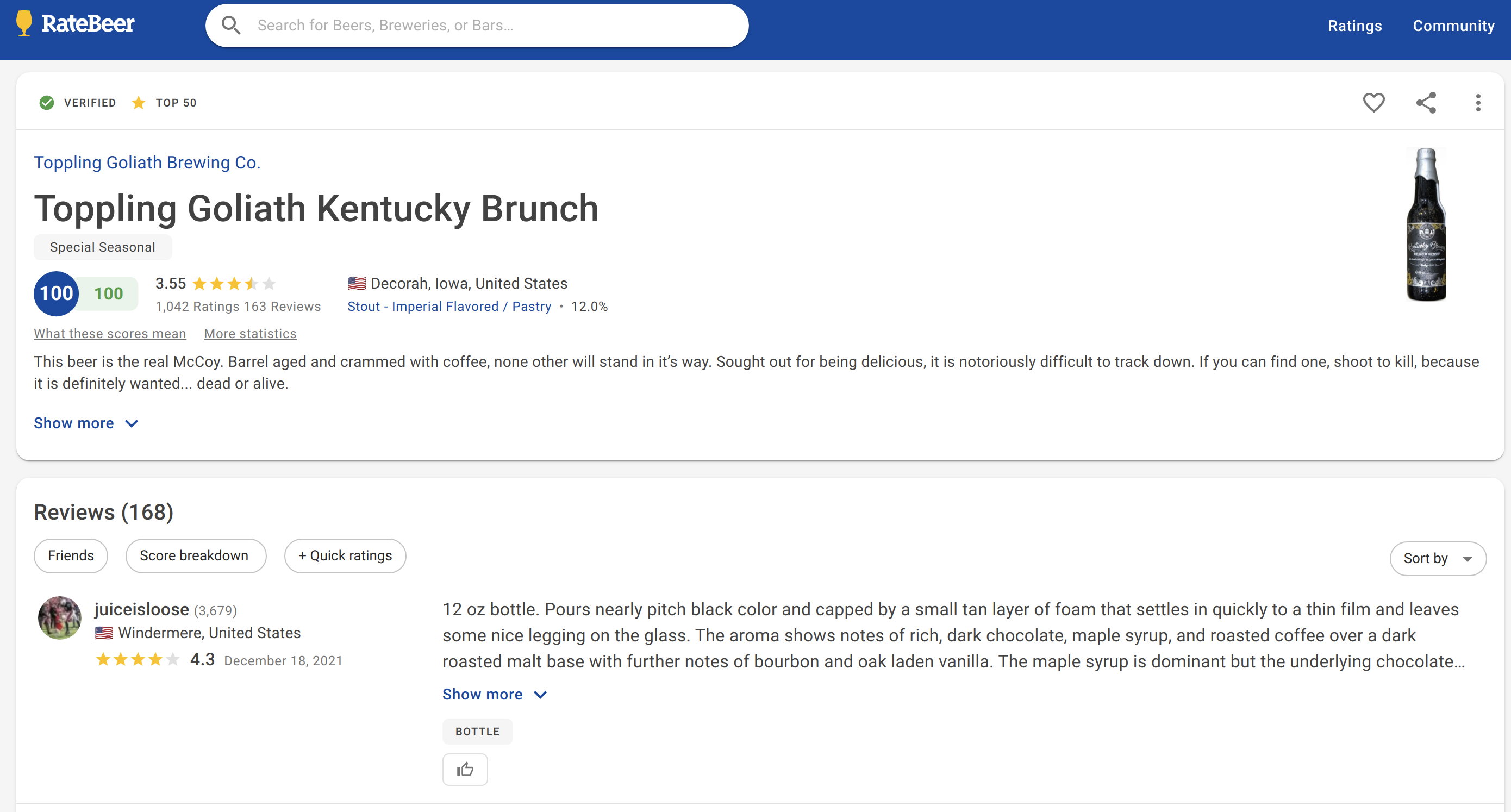}
    \caption{\textbf{\emph{RateBeer} example:} A textual review and numerical rating for a beer (with metadata).}
    \label{fig:rate_beer}
\end{figure}

\begin{figure}[p!]
    \centering
    \includegraphics[width=0.8\linewidth]{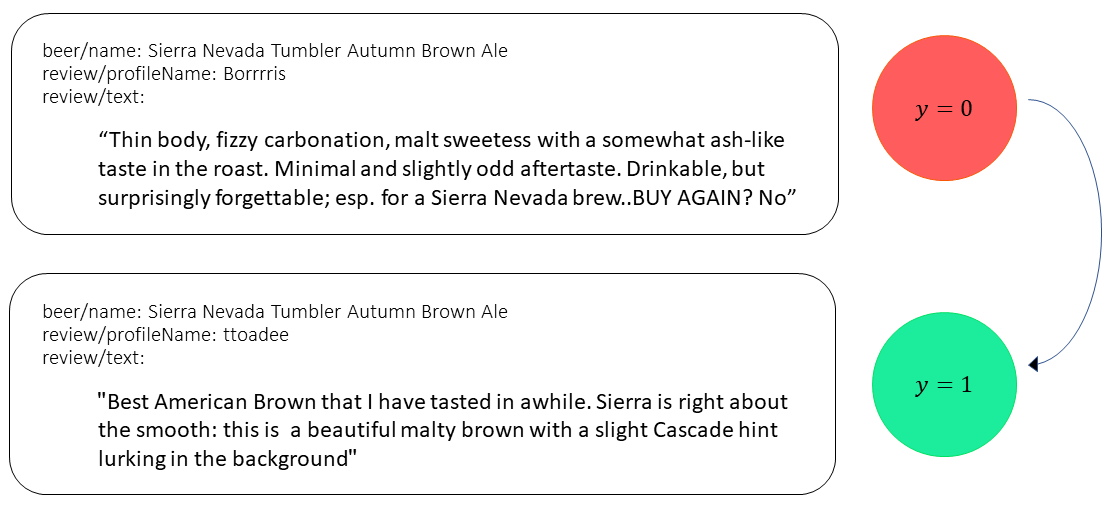}
    \caption{\textbf{\textit{RecBeer} interventions:}
    An example of a simulated intervention for causal users,
    for which changing the review shown to the user (bottom)
    to another (top) may influence his behavior (here, from not choosing to choosing).
    }
    \label{fig:reviews_swap}
\end{figure}

\begin{figure}[p!]
    \centering
    \includegraphics[width=0.65\linewidth]{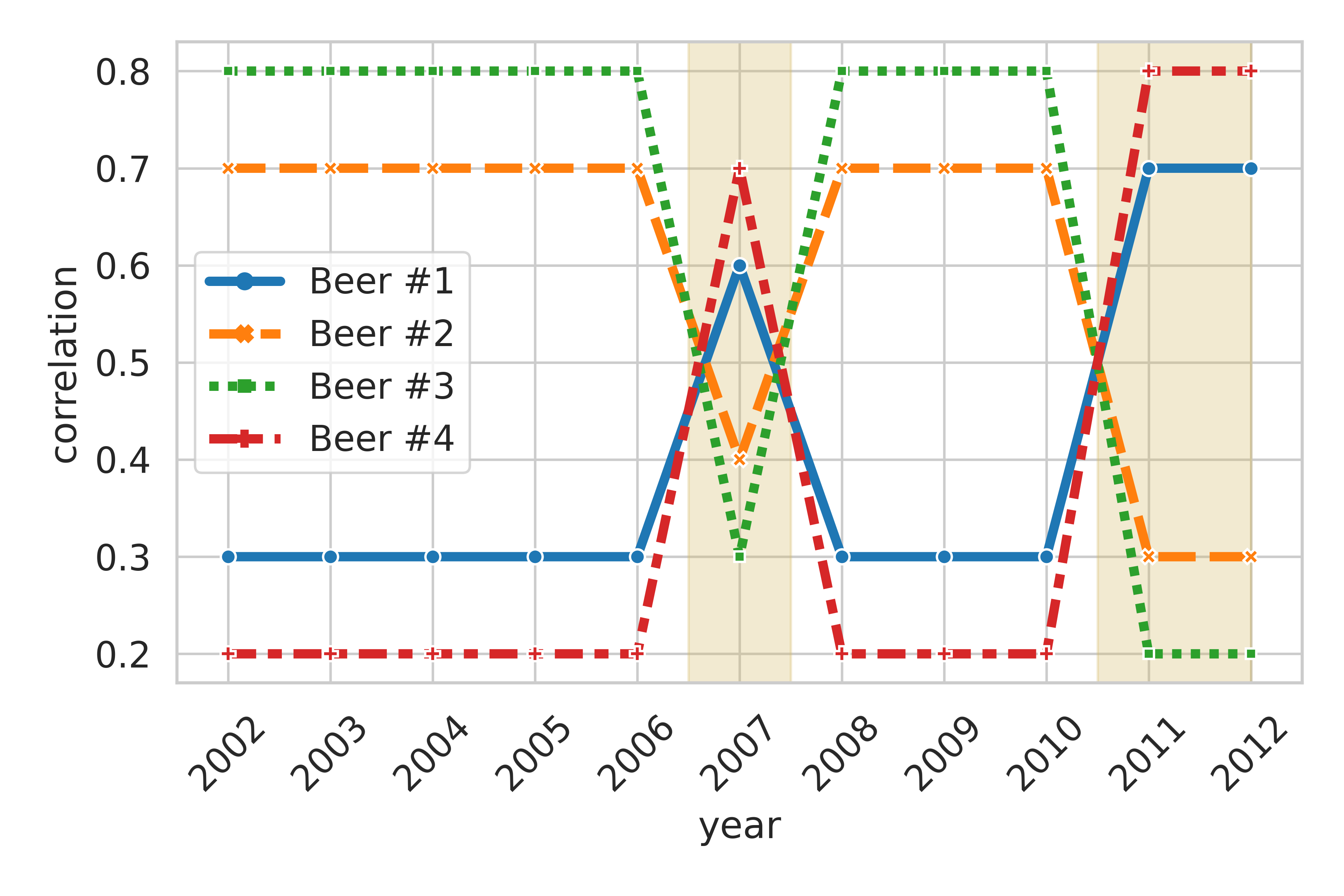}
    \caption{\textbf{\emph{RecBeer} environments:}
    Each year serves as a different environment, whose affect is expressed through differing correlations between beer types and user choices.
    The plot shows the temporal correlation structure used for the experiment in \S \ref{sec:recbeer}, and underlie the results presented in Fig. \ref{fig:res_beer}.
    Periods with substantial changes are highlighted in \textcolor{brown}{tan}.}
    \label{fig:corrs_beer}
\end{figure}

\paragraph{Data Generation Process.}
The original RateBeer dataset includes 
reviews and rating that were authored and submitted by users of the platform.
For our purposes, focusing learning and prediction on users as \emph{contributors} of content has two limitations:
(i) we cannot know what platform-selected information ($r$) was presented to them and how it influenced their decisions,
and (ii) we cannot reason counterfactually about their potential choices had they been exposed to different information.

To overcome both issues, we adapt the original dataset to simulate choice behavior of users as \emph{consumers} of content,
as they use the platform to make informed decisions about beer consumption.
We emulate the following process:
a user $u$ logs on to the platforms, and is recommended a certain beer.
The beer is described by intrinsic features $x$,
and one platform-selected textual review $r$,
chosen from a pool of already-existing reviews for that beer (these being the reviews for that beer that have already by submitted by other contributing users).
The user then decides weather to try (i.e., consume) the beer ($y=1$) or not ($y=0$).
Our goal is to predict for new users $u$ their choices $y$ for recommended beers given descriptions $x,r$.

To create features for beers $x$ and (consuming) users $u$,
we aggregate information from all corresponding reviews:
for beers---all reviews of that beer, and for users---all reviews authored by that user.
This includes features such as average past taste score for beers and average past overall satisfaction for users. Table \ref{tab:recbeer_features} summarizes our feature space. 
Since we model users as \emph{causal}, the graph edge $r{\rightarrow}y$ implies that changes to $r$ causally affect $y$.
To simulate this behavior, we create for each user an `intervention space' which includes a collection of possible interventions $r$ and their corresponding counterfactual outcomes $y$.
For our experiment, we simply take all pairs of reviews and ratings $(r,s)$
for a given beer to be the set of possible interventions and outcomes.
Textual reviews are featurized using a pre-trained BERT model \cite{devlin2018bert},
and numerical ratings $s\in[0,5]$ are transformed into binary choices $y=\{0,1\}$ by setting $y=1$ if the user's rating for that beer was above the median rating (for that beer), and $y=0$ otherwise.
Since learning requires observational data, for each user-beer pair $(u,x)$
we sample (in a way we describe shortly) one review-choice pair $(r,y)$ out of 100 unique reviews for that beer;
an example is presented in Figure \ref{fig:reviews_swap}.
This provides a sampled tuple $(u,x,r,y)$ expressing the behavior of a \emph{causal} user whose choices are affected by the review presented to her.
Together, $u,x$, and $r$ (as an embedding) include 866 features.


Finally, to model the effects of changing environments,
we consider an environment variable $\env$ that encodes the year,
expressing the idea that different years may express different `trends' in which beer \emph{types}\footnote{We create four beer `types' by aggregating beers of similar style. For example, 
the styles \emph{Doppelbock}, \emph{Dortmunder}, \emph{Dunkel}, \emph{Dunkelweizen}, and \emph{Dunkler} were all attributed to the same type.}
are more (and less) fashionable.
To implement this, we sample review-choice pairs for users within each year in a way that introduces a pre-determined amount of correlation between choices and beer types.
The chosen per-year correlation levels is plotted in Figure \ref{fig:corrs_beer}. Notice the drastic change in fashions in 2007 and 2011.



\paragraph{Training and testing.}
We train and evaluate one model per year. For each year $\env \in \{2006,\dots,2012\}$, training is performed on data from 
years $\{2002,\dots,\env-1\}$ and tested on $\env$.
In this way, fashions regarding beer type accumulate over time.

\paragraph{Models.}
We learn a linear model that takes as input the concatenation of $u,x,r$.
The learning objective includes a binary cross entropy loss, 
and marginal MMD as regularization \cite{gretton2012kernel}
(since we model users as causal; see \S \ref{sec:learning}).
We trained all models for 700 epochs with $lr=0.01$ and batches of size 1024,
and set $\lambda=100$.
Results are averaged over five runs with different random seeds.

\subsection{\emph{RecFashion} (anti-causal users)}

\paragraph{Original Dataset Statistics.}
The \textit{Fashion Product Images} dataset includes 
a large collection of fashion items, described by an image
and additional attributes such as:
season, gender, base color, usage, year, and product display name.
Items are organized by category, sub-category, and type;
we focus on the \emph{apparel} category.
Table \ref{tab:fashion_stats} provides summary statistics.


\begin{figure}[t!]
    \centering
    \includegraphics[width=0.8\linewidth]{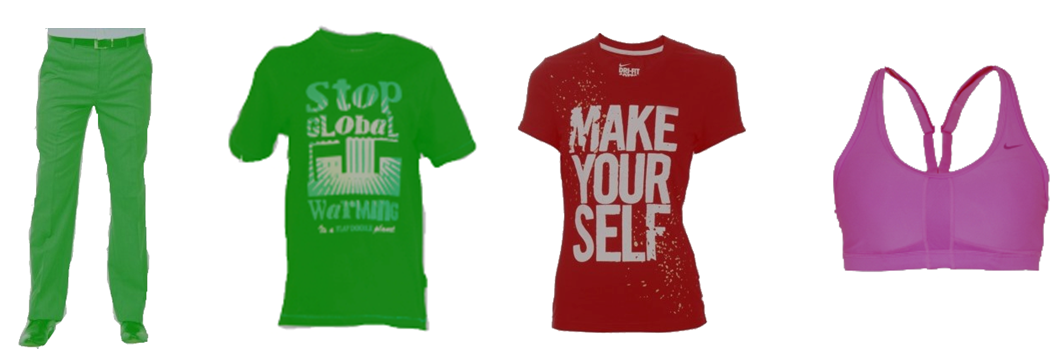}
    \caption{Fashion items in the \textit{RecFashion} dataset with recommended colors. On the left side are \textcolor{green}{green} recommendations and on the right side are \textcolor{red}{red} recommendations.}
    \label{fig:fashion_items}
\end{figure}

\begin{table}[h!]
    \centering
    \caption{Original \textit{Fashion Product Images} dataset statistics.}
    \begin{tabular}{l|l} \toprule 
        number of items & $44,447$ \\
        main categories & $7$ \\
        sub-categories & $45$ \\
        types & $142$ \\\bottomrule
    \end{tabular}
    \label{tab:fashion_stats}
\end{table}

\paragraph{Data Generation Process.}
The original dataset does not include user choices (or any other form of user behavior).
To simulate user choices, we imagine a setting were the platform recommends to each user an item by presenting an image of the item ($x$)
in a certain color ($r$).
We set $x$ to be the item's grayscale image,
and set $r$ to be a colorization of that image into one of two colors: red or green.
Users then choose whether to buy the item or not, $y \in \{0,1\}$.
We then model users as choosing primarily on the basis
of the `gender' attribute of items, $x_g \in \{0,1\}$,
and set $y=x_g$ w.p. 0.75 and $y=1-x_g$ otherwise.

Since users in this experiments are anti-causal,
they act under the belief that changes in $y$ affect $r$
(here we do not make use of the edge $y{\rightarrow}x$).
Note that $\env$ also affects $r$.
We implement this joint influence of $\env,y$ on $r$ by assigning colors to images in a way that obtains a certain level of correlation between the color $r \in \{\mathrm{red,green}\}$ and choices $y$.
Technically, we associate with each environment $\env$ a parameter $p_\env \in [0,1]$. Then, using a color variable $c=0$ for red and $c=1$ for green
we assign for each item its color as $c=y$ w.p. $p_e$, and $c=1-y$ otherwise.
Thus, different environments entail different conditional distributions
$P(r=\mathrm{red}|y=1)=P(r=\mathrm{green}|y=0)=p$,
which reflect an anti-causal structure.
Finally, given the sampled $c$,
we colorize the image $x$ as follows:
if $c=1$, we set $x_{R}\leftarrow0.5 + 0.2x_{R} $, $x_{G}\leftarrow 0.7x_{G} $, $x_{B}\leftarrow 0.7x_{B} $;
if $c=0$, we set $x_{G}\leftarrow0.5 + 0.2x_{G} $, $x_{R}\leftarrow 0.7x_{R} $, $x_{B}\leftarrow 0.7x_{B} $ ($R,G,B$ are the color channels).
Note that this means users do not observe $x,r$ independently, but
rather a colored image that is a product of both $x$ and $r$.

\paragraph{Training and testing.}
We run eight experiments that differ in the average degree of correlation in the training sets, for average correlation values of $p \in \{0.1, 0.2, \dots, 0.8\}$.
Each experimental condition ($p$) includes training data from six environments $\env$, with correlations
$p_env \in \{p - 0.025, p + 0.025,p - 0.05, p + 0.05, p - 0.1, p + 0.1\}$
(their average is $p$).


\paragraph{Models.}
For the model We used a feed forward neural network with three hidden layers and a hidden dimension of size 256, ReLU activation function and $NLL$ as our base loss function.
For computational efficiency, input images were resized to $14\times14$.
The learning objective includes a binary cross entropy loss, 
and a conditional DeepCORAL regularizer  \cite{sun2016deep}
(since we model users as anti-causal; see \S \ref{sec:learning}).
We set $\lambda=5000$ in the first 125 epochs and $\lambda=1$ in the rest,
and trained the model for 1,900 epochs with $lr=0.001$ and batches of size 1024.

\section{Loss Functions.}
\label{sec:app_learning}
We train all of our models with either the \textit{CORAL} or \textit{MMD} loss. Empirically, we found that \textit{CORAL} we more stable in the \textit{RecFashion} experiments and. In the \textit{RecBeer} experiments, models trained with the \textit{MMD} loss consistently outperformed those who were not. 
When conditioning on the label $y$, we compute $l_{dist}$ (either $l_{CORAL}$ or $l_{MMD}$) separately for cases where $y=1$ and $y=0$. We describe here both loss functions.

\paragraph{\textit{CORAL} Loss.}

The \textit{CORAL} loss is the distance between the second-order statistics of two feature representations, corresponding to different $z$:
\begin{equation*}
    l_{CORAL}(f(x,r),z) = \frac{1}{d^2} || C_{z} - C_{z'} ||^{2}_{F}
\end{equation*}

where $|| \cdot ||^2_{F}$ denotes the squared matrix Frobenius norm. The covariance matrices of the source and target data are given by:

\begin{align*}
    C_z =& \frac{1}{n_z - 1} (\phi(x(z),r)^\top \phi(X(z),r) \\
    &- \frac{1}{n_z}(\textbf{1}^\top \phi(x(z),r))^\top (\textbf{1}^\top \phi(x(z),r)))
\end{align*}

where \textbf{1} is a column vector with all elements equal to 1, and $\phi(\cdot)$ is the feature representation.

\paragraph{\textit{MMD}.}

Maximum mean discrepancy (\textit{MMD}) measures distances between mean embeddings of features. That is, when we have distributions $P$ and $Q$ over a set $\mathcal{X}$. The \textit{MMD} is defined by a feature map $\phi : \mathcal{X} \rightarrow \mathcal{H}$, where $\mathcal{H}$ is what's called a reproducing kernel Hilbert space. In general, the \textit{MMD} is
\begin{equation*}
    \text{MMD}(P,Q) = || \mathbb{E}_{X}[\phi(X)] - \mathbb{E}_{Y}[\phi(Y)] ||_{\mathcal{H}}
\end{equation*}

For use of the MMD loss for causal representation learning, see \citet{veitch2021counterfactual}.

\end{document}